\documentclass[letterpaper, 10 pt, conference]{ieeeconf}
\IEEEoverridecommandlockouts{}   
\usepackage{cite}

\usepackage{amsmath,amssymb,amsfonts,mathtools,amsthm}

\usepackage{algorithmic}
\usepackage{graphicx}
\usepackage{textcomp}
\usepackage{color}
\usepackage{xcolor}

\newtheoremstyle{rmk}
  {0pt}{0pt}           % space above / below
  {\normalfont}        % body font: upright
  {\parindent}         % indent
  {\itshape}           % head font: italic
  {:}                  % punctuation after head
  {.5em}               % space after head
  {\itshape\thmname{#1}\thmnumber{ #2}\thmnote{ (#3)}}

\newcommand{\tcb}{\textcolor{black}}

\DeclareMathOperator*{\argmin}{arg\,min}

\newtheorem{thm}{Theorem}

\newtheorem{lemma}{Lemma}
\newtheorem{cor}{Corollary}
\newtheorem{defn}{Definition}
\newtheorem{assumption}{Assumption}

\theoremstyle{rmk}
\newtheorem{remark}{Remark}

\newcommand\pr{\text{Proj}}

\newcommand\gdec{\mathcal{G}^\text{dec}_c}

\newcommand\ep{\varepsilon}

\newcommand\re{\mathbb{R}}

\newcommand\cd{\mathcal{D}}
\newcommand\ce{\mathcal{E}}
\newcommand\cg{\mathcal{G}}

\newcommand\cj{\mathcal{J}}

\newcommand\cn{\mathcal{N}}

\newcommand\cx{\mathcal{X}}

\newcommand\cf{\mathcal{F}}

\newcommand{\co}{\mathcal{O}}
\newcommand{\B}{\mathbb{B}}
\newcommand{\bs}{\mathbb{S}}
\newcommand{\N}{\mathbb{N}}

\newcommand{\id}{\mathbb{I}}
\newcommand{\ex}{\mathbb{E}}

\DeclarePairedDelimiter{\ceil}{\lceil}{\rceil}
\newcommand{\norm}[1]{\left\lVert#1\right\rVert}

\newcommand{\iprod}[2]{\ensuremath{\left\langle #1,\,#2 \right\rangle}}
\newcommand*{\QEDB}{\hfill\ensuremath{\square}}%
\newcommand*{\pQEDB}{\hfill\ensuremath{\blacksquare}}

{\bf}{\it}

\def\BibTeX{{\rm B\kern-.05em{\sc i\kern-.025em b}\kern-.08em
    T\kern-.1667em\lower.7ex\hbox{E}\kern-.125emX}}
\begin{document}

\title{\textbf{The Deceptive Bandit Problem:\\ Exploratory Coupling and the Fragility of Multi-Agent Learning}}

% \title{\textbf{The Deceptive Bandit Problem: On the Fragility of Multi-Agent Learning via Exploratory Coupling}}

\author{Michael Tang, Mahmoud Abdelgalil, Jorge I. Poveda
\thanks{M. Tang and J. I. Poveda are with the Electrical and Computer Engineering Department at the University of California, San Diego, La Jolla, CA, USA. M. Abdelgalil is with the Mechanical and Aerospace Engineering Department at the University at Buffalo, State University of New York, Buffalo, NY, USA. }
\thanks{This work was supported in part by NSF grants DGE-2545911, CMMI 2228791, and the DARPA grant HR0011-25-3-0225.}
}

\maketitle

\begin{abstract}
Randomized exploration is central to bandit learning, multi-agent reinforcement learning, and zeroth-order policy search, yet its independence and privacy are usually only treated as technical assumptions. We show that these properties are critical for security purposes and demonstrate how an adversarial agent can exploit privileged information on another agent’s exploration. We analyze a deceiver-victim pair in the minimal two-player strongly monotone setting, where a deceptive player obtains leaked signals that are merely \emph{correlated} with the victim's exploration. We show that, by coupling their own exploratory action with this information, the deceptive player injects an externality that steers the learning dynamics to a new steady state, called the \emph{deceptive Nash equilibrium (DNE)}. We prove that the deceptive bandit learning (DBL) dynamics converge to an arbitrarily small neighborhood of the DNE while retaining optimal convergence rates. Interestingly, our analysis attains these optimal rates while relaxing \tcb{second-order smoothness} conditions from standard bandit optimization literature. We characterize conditions under which deception strictly shifts the steady state and its effect on the deceiver's cost, illustrating the results in a \tcb{resource-allocation game}.

% A long standing assumption in multi-agent bandit learning is that each agent explores independently of its peers. Herein, we challenge this premise by considering an asymmetric setting where an adversarial agent intercepts and exploits privileged information regarding another agent's exploration policy.
\end{abstract}

% \begin{IEEEkeywords}
% feedback optimization, singular perturbations, stochastic systems, covariance control.
% \end{IEEEkeywords}

\section{Introduction}
\label{sec:introduction}
% We have \cite{11048686}

%
Multi-agent learning has attracted significant attention in recent years with applications spanning many domains. In many of these applications, agents do not have access to gradient information and must instead rely solely on \emph{bandit feedback} \cite{flaxman2005online}, i.e., measurements of their cost/payoff functions, to update their policies. This has motivated a rich body of literature on \emph{multi-agent bandit learning}, where agents simultaneously explore their action spaces and exploit observed costs/payoffs to find optimal policies. Example applications include resource allocation \cite{landgren2021distributed}, distributed and networked systems \cite{liu2010distributed,shahrampour2017multi,zhu2023distributed}, robotics \cite{sankararaman2019social}, and financial decision-making \cite{misra2019dynamic,mueller2019low}. 
More broadly, randomized exploration is a basic ingredient of many learning methods, including stochastic approximation \cite{119632}, multi-agent reinforcement learning \cite{littman1994markov}, and zeroth-order policy search \cite{malik2019derivative}. 
If agents learn through bandit feedback, the convergence to the Nash equilibrium (NE) \cite{marden2009payoff,tatarenko2018learning,pang2020distributed} relies on the assumption that the exploratory policies used by different agents are \emph{independent} and \emph{private}. However, in an open, networked, or competitive environment, such an assumption is fragile; an adversarial agent can \emph{intercept} or \emph{infer} privileged information about the exploratory actions of other agents \cite{vamvoudakis2025deception,ye2021differentially, pmlr-v125-boursier20a}.
\emph{\textbf{Contributions:}} We expose a security vulnerability in exploration-driven multi-agent learning: information correlated with an agent’s exploratory behavior can be amplified into a persistent bias that changes the long-term outcome of learning. To isolate the core mechanism, we formalize it in two-player strongly monotone games, where a deceptive player modifies only its own exploration using a noisy signal correlated with the oblivious player’s current dither; extensions to $N$-player settings are also discussed. This unilateral coupling induces an externality in the oblivious player’s gradient estimate, scaled by the dither-signal correlation term. We prove that the resulting \emph{deceptive bandit learning} (DBL) dynamics converge to a small neighborhood of a new steady state, a \emph{deceptive} Nash equilibrium (DNE), with non-asymptotic mean-square guarantees. Our analysis attains the optimal convergence rates from \cite{9992950} while relaxing their \tcb{second-order smoothness} conditions on the players' costs; to our knowledge, this relaxation is also novel in the deception-free setting. We characterize when deception strictly shifts the equilibrium and provide geometric conditions under which this shift improves the deceiver's steady-state cost. Conversely, we also outline some defense strategies against deception that eliminate the induced bias. We then illustrate our main results on a resource-allocation game.

\emph{\textbf{Related Literature:}} Other works that study the vulnerabilities of multi-agent learning algorithms include utility measurement poisoning \cite{yao2025single, wu2023reward}, adversarial policies \cite{gleave2019adversarial}, and continuous-time equilibrium seeking \cite{11048686,tang2025stochastic}. Unlike \cite{yao2025single, wu2023reward, gleave2019adversarial}, the deceiver in our setting never tampers with the victim's cost measurements, but manipulates learning solely through its own feasible exploratory action. In \cite{11048686}, the deterministic sinusoidal probe structure means that knowledge of the victim’s exploration frequency suffices to reproduce its exploration signal; \cite{tang2025stochastic} requires exact real-time measurements of an OU-driven probe. Both study unconstrained continuous-time dynamics and establish local practical asymptotic guarantees under multiple vanishing parameter limits. In contrast, our setting does not require exact reconstruction or real-time measurements of the exploration signal; mere correlation suffices, as our deceiver receives only a noisy, possibly intermittent, randomly transformed, history-adaptive signal correlated with the current dither. We show that the conditional cross-covariance between leakage and exploration exactly governs the induced bias in the victim’s gradient estimator. For constrained, discrete-time bandit play, this yields, to our knowledge, the first non-asymptotic analysis of adversarial exploration coupling in multi-agent learning, with mean-square convergence guarantees.

\section{Preliminaries}
\label{sec_prelim}
Given a matrix $A\in\re^{n\times m}$, we use $\norm{A}$ to denote the operator norm of $A$. If $A$ is a vector, then $\norm{A}$ evaluates to the Euclidean norm of $A$. If $f:\re^n\to\re^m$ is differentiable, then $\nabla f:\re^n\to\re^{n\times m}$ denotes the transpose of the Jacobian of $f$. We use $\bs^n=\{s\in\re^n : \norm{s}=1\}$ and $\B^n=\{s\in\re^n : \norm{s}\le 1\}$ to denote the unit sphere and ball in $\re^n$, respectively. We denote the ball around $p\in\re^n$ of radius $r>0$ by $\B_r(p)=\{s\in\re^n : \norm{p-s}\le r\}$. For a closed convex set $E\subset\re^n$, the projection operator is defined as $\pr_E(x)=\argmin_{p\in E} \norm{x-p}$. We use $\text{int} ~ E$ to denote the interior of the set $E$.

\section{Problem Setup}
\label{sec_prob}
{We focus on a two-player game, the minimal setting containing a deceiver-victim pair; the generalization to $N$-player settings is discussed in Remark \ref{hetero}.} Player $i$ selects an action $x_i$ from an action space $\cx_i\subset\re^n$ with the goal of minimizing their cost $J_i:\cx_1\times\cx_2\to\re$. The joint action profile is given by $x=(x_1, x_2)\in\cx$, where $\cx=\cx_1\times\cx_2$ denotes the game's joint action space. A point $x^*$ is said to be a \emph{Nash equilibrium (NE)} if each player has no incentive to unilaterally change their action, i.e.,
\begin{equation}
    x_i^*\in\argmin_{x_i\in\cx_i}J_i(x_i;x_{3-i}^*),\quad i\in\{1,2\},
\end{equation}
where we use $J_i(x_i;x_{3-i})$ and $J_i(x)$ interchangeably. Given the costs, the \emph{pseudogradient} of the game is defined as:
\begin{equation}
    \cg(x):=\begin{bmatrix}
        \nabla_1 J_1(x) \\ \nabla_2 J_2(x)
    \end{bmatrix},        
\end{equation}
where $\nabla_i(\cdot)$ represents differentiation with respect to $x_i$.
We impose the following assumptions on the game parameters:
%
%\vspace{-5pt}
\begin{assumption}\label{assumps}
    For each $i\in\{1,2\}$ and $j\in\{1,2\}$, the following hold:
    \begin{enumerate}
    \item The set $\cx_i$ is compact and convex, and there exists $p_i\in\cx_i$, $r_i>0$ such that $\B_{r_1}(p_1)\times \B_{r_2}(p_2)\subset \cx$.
    \item The function $J_i(x)$ is continuously differentiable on an open neighborhood of $\cx$, and there exists $L_{ij}>0$ such that the function $\nabla_j J_i(x)$ is $L_{ij}$-Lipschitz on $\cx$.
    \item There exists $\gamma>0$ such that the operator $\cg(\cdot)$ is $\gamma$-strongly monotone over $\cx$, i.e., the following holds
    \begin{equation}\label{strongm}
        \iprod{\cg(x)-\cg(y)}{x-y}\ge \gamma\norm{x-y}^2,
    \end{equation}
    for all $x,y\in\cx$.\QEDB
    \end{enumerate}
\end{assumption}
%\vspace{-5pt}
Conditions 1-3 are standard in continuous-action bandit learning \cite{flaxman2005online,yan2023fast,bravo2018bandit}; note that only first-order smoothness is imposed. Games with pseudogradients that satisfy \eqref{strongm} are referred to as \emph{$\gamma$-strongly monotone}, which are ubiquitous in the game theory literature \cite{yan2023fast, ba2025doubly, jordan2025adaptive}.
It is well-known that in games where Assumption \ref{assumps} holds, the Nash equilibrium $x^*$ is unique and satisfies the following variational inequality
\begin{equation}\label{varineq}
    \iprod{\cg(x^*)}{x-x^*}\ge 0,\quad\forall x\in\cx.
\end{equation}
Hence, we say that an action profile $x^*\in\cx$ is a ``\emph{Nash equilibrium of $(\cg, \cx)$}" if \eqref{varineq} holds.

\subsection{Standard Bandit Feedback Algorithm}
For our work, we will study the bandit feedback-based algorithm from \cite{flaxman2005online,bravo2018bandit, 9992950}, which can be implemented by the agents to learn the Nash equilibrium by only using measurements of their cost functions. At each time step $k$, player $i$ samples a vector $\mu_{i,k}\in\bs^n$ uniformly at random and computes the following intermediate exploration step
\begin{equation}\label{explore}\bar{x}_{i,k}=x_{i,k}+\delta\mu_{i,k},
\end{equation}
where $\delta\in (0,\min_i r_i)$ is the search radius and $x_{i,k}$ is the action of player $i$ at the $k$-th iteration. For notational convenience, we assume all players use the same perturbation radius $\delta$. This is quite standard \cite{bravo2018bandit, 9992950}, and the case of heterogeneous $\delta_i$ is discussed in Remark \ref{hetero}. Player $i$ then performs the action $\bar{x}_{i,k}$, observes their resulting cost $J_i(\bar{x}_k)$ (with $\bar{x}_k=(\bar{x}_{1,k}, \bar{x}_{2,k})$), and updates their action:
\begin{equation}\label{exploit}
    x_{i,k+1}=\pr_{\cx_i^\delta}\left(x_{i,k}-\eta_k\frac{n}{\delta}J_i(\bar{x}_k)\mu_{i,k}\right),
\end{equation}
where $\cx_i^\delta=p_i+(1-\frac{\delta}{r_i})(\cx_i-p_i)$; intuitively, this is a homothetic contraction of $\cx_i$ towards the point $p_i$ (from Assumption \ref{assumps}). The projection operator in \eqref{exploit} ensures that the action profile at the next iterate, specified in \eqref{explore}, remains feasible. This type of feasibility adjustment, which is common in bandit learning \cite{bravo2018bandit}, generalizes the contraction $(1-\delta)\cx_i$ from \cite{9992950} and allows us to consider action spaces that need not contain the origin. Both players use a common learning rate $\eta_k$ that satisfies the standard assumption \cite{119632}:
%
%\vspace{-5pt}
\begin{assumption}\label{as_eta}
    The sequence $\{\eta_k\}_{k=1}^\infty$ satisfies:
    \begin{equation}\label{etak}       \lim_{k\to\infty}\eta_k=0,\quad\sum_{k=1}^\infty \eta_k=\infty,
    \end{equation}
    and $\eta_k>0$ for each $k\in\N$.\QEDB
\end{assumption}
%\vspace{-5pt}
%
The algorithm \eqref{explore}-\eqref{exploit} has been covered extensively in the \emph{bandit optimization} literature \cite{flaxman2005online,9992950, bravo2018bandit, ba2025doubly}, and the main intuition behind it is in the signal $\frac{n}{\delta}J_i(\bar{x}_k)\mu_{i,k}$ from \eqref{exploit}, which serves as a single-point stochastic gradient estimator. More precisely, its conditional expectation equals the gradient of a suitably smoothed version of $J_i$, which differs from $\nabla_i J_i(x_k)$ by an $\co(\delta)$ term under Assumption \ref{assumps} (where $x_k=(x_{1,k}, x_{2,k})$). Hence, \eqref{exploit} can be interpreted as a noisy gradient descent update on a slightly perturbed game. Under Assumptions \ref{assumps}-\ref{as_eta} and independent exploration, these dynamics are known to efficiently learn the unique NE.
% Classical results in bandit learning establish that, if Assumptions \ref{assumps}-\ref{as_eta} hold, then the iterates of \eqref{explore}-\eqref{exploit} satisfy $\limsup_{k\to\infty}\ex\norm{x_k-x^*}^2=\co(\delta^2)$ \cite{9992950}.

\section{Bandit Feedback with Deception}
\label{sec_bndtdeception}
The convergence guarantees of exploration-based algorithms, such as \eqref{explore}-\eqref{exploit}, rely on the premise that agents conduct their exploration independently and without interference. However, in competitive, adversarial, or asymmetric environments, these independence and privacy assumptions can break down \cite{vamvoudakis2025deception,ye2021differentially, pmlr-v125-boursier20a}.  In games, the signals available to the players form an information structure separate from the actions and costs \cite{bergemann2016bayes}. Models of espionage or information leakage study settings where one player receives noisy or partial information about an opponent's selected strategy or realized randomization \cite{solan2004games, alon2013adversarial, alvim2017information}. These works motivate the need to examine analogous asymmetries in bandit learning, where the leaked randomization is now the mechanism by which each player learns.

Specifically, we consider a setting where player 2 intercepts or infers privileged information regarding player 1's search policy. If player 2 integrates this information into only their own actions, they can manipulate the gradient estimate that player 1 uses to learn. Here, we call player 2 the \emph{deceptive} player, while player 1 is the \emph{oblivious} player. This section formalizes the structure of the leaked information, the DBL dynamics, and their long-term effects on learning.

% We consider a setting where player 2 possesses the ability to intercept, anticipate, or observe privileged information regarding player 1's instantaneous search policy. In this case, player 2 will be referred to as the \emph{deceptive} player, while player 1 is the \emph{oblivious} player. As we show later, if player 2 integrates this information into their own actions, they can manipulate the cost landscape that player 1 is trying to learn. 

% In this section, we formalize this adversarial capability. We define the structure of the leaked information from the oblivious player and introduce the \emph{deceptive dynamics} that result from player 2's exploitation of this information. We then demonstrate how player 2's manipulation alters the underlying emergent behavior of the game, shifting the system away from the nominal Nash equilibrium and towards a new steady state: the deceptive Nash equilibrium.

\subsection{Information Asymmetry}

Suppose that at each iteration $k$, each player selects a search direction $\mu_{i,k}\in\bs^n$ independently and uniformly at random, but player 2 also receives a signal $\Theta_{1,k}\in\re^n$ possibly containing privileged information regarding $\mu_{1,k}$. We use $\cf_k$ to denote the pre-round history up to stage $k$, and assume that $\Theta_{1,k}$ satisfies the following structural condition:
\vspace{-15pt}
\begin{assumption}\label{as_th}
    \tcb{For $k\in\N\cup\{0\}$, the signal $\Theta_{1,k}\in\re^n$ takes the form
    \begin{equation}\label{th1k}        \Theta_{1,k}=\Lambda_k\mu_{1,k}+w_k,
    \end{equation}
    where, conditional on $\cf_k$, the following hold almost surely:
    \begin{enumerate}
        \item The directions $\mu_{1,k}$ and $\mu_{2,k}$ are independent and uniformly distributed on $\bs^n$. Moreover, the pair $(\Lambda_k, w_k)$ is conditionally independent of $(\mu_{1,k}, \mu_{2,k})$.
        \item There exists a matrix $\Gamma\in\re^{n\times n}$, independent of $k$, such that $\ex_k[\Lambda_k]=\Gamma$ a.s, where $\ex_k[~\cdot~]:=\ex[~\cdot~ | \cf_k]$.
        \item There exists constants $\bar{\Lambda}, \bar{w}>0$ such that $\norm{\Lambda_k}\le\bar{\Lambda}$  and $\norm{w_k}\le\bar{w}$ a.s.
    \end{enumerate}
    All random variables are defined on the filtered probability space $(\Omega, \cf, \{\cf_k\}_{k\ge 0},\mathbb{P})$, with pre-round history given by the $\sigma$-algebra $\cf_k=\sigma(x_0, \{\mu_{1,\ell}, \mu_{2,\ell},\Lambda_\ell, w_\ell\}_{\ell < k})$.\QEDB
    }
\end{assumption}
%\vspace{-5pt}

Assumption \ref{as_th} captures a range of interception scenarios, such as channel leaks, noisy sensing, and intermittent measurements. In a practical context, $\Theta_{1,k}$ represents player 2's imperfect measurement of player 1's exploratory behavior. 
% In contrast with \cite{11048686, tang2025stochastic}, equation \eqref{th1k} does not assume perfect \tcb{real-time} knowledge of the oblivious player's dithers.

%Note: this formulation does not require that the deceptive player always receives the signal, since the case of probabilistic deception can be incorporated by multiplying a Bernoulli RV to Lambda_k and w_k

\tcb{The structure of \eqref{th1k} isolates two fundamental sources of imperfection in this observation. The random matrix $\Lambda_k$ models the transformation, attenuation, or scaling of the intercepted signal. The existence of $\Gamma$ ensures that the dither-signal cross-covariance is well-behaved. Meanwhile, $w_k$ captures ambient measurement noise or channel interference, which we do not require to be zero mean. More generally, since $\mu_{1,k}=\frac{1}{\delta}(\bar{x}_{1,k}-x_{1,k})$, the signal $\Theta_{1,k}$ may also be a noisy measurement of the played action $\bar{x}_{1,k}$ rather than the actual dither. Notably, the formulation \eqref{th1k} also allows for intermittently available measurements. Indeed, suppose that at stage $k$ the signal is only intercepted with probability $\rho\in (0,1]$. Taking $\Lambda_k=b_k \widetilde{\Lambda}_k$ and $w_k=b_k\widetilde{w}_k$ with $b_k | \cf_k\sim\text{Bernoulli}(\rho)$ (independent of the dithers and $\widetilde{w}_k, \widetilde{\Lambda}_k$) gives $\Gamma=\rho\widetilde{\Gamma}$, where $\widetilde{\Gamma}=\ex_k[\widetilde{\Lambda}_k]$. Hence, missed or late interceptions merely attenuate the mean dither-signal cross-covariance by $\rho$.}
% This implies that player 2 can successfully deceive even with poorly calibrated and persistently biased observations, provided the noise is bounded with probability one.

\subsection{Bandit Feedback Algorithm with Deception}

With the information asymmetry formally defined, we now construct the modified learning dynamics. We assume player 1 remains oblivious to deception and continues to execute the standard bandit dynamics given below
\begin{subequations}\label{dx1}
    \begin{align}        \bar{x}_{1,k}&=x_{1,k}+\delta\mu_{1,k}\\
    x_{1,k+1}&=\pr_{\cx_1^\delta}\left(x_{1,k}-\eta_k\frac{n}{\delta}J_1(\bar{x}_k)\mu_{1,k}\right).
    \end{align}
\end{subequations}
Meanwhile, suppose that player 2 exploits the leaked signal by actively coupling their exploration with player 1's search direction. Before the stage $k$ costs are realized, player 2 injects the signal $\Theta_{1,k}$ into their exploration step with a tunable gain $c$. The dynamics implemented by player 2 are:
\begin{subequations}\label{dx2}
    \begin{align}        \bar{x}_{2,k}&=x_{2,k}+\delta\mu_{2,k}+c\delta\Theta_{1,k}\label{dx2h}\\
    x_{2,k+1}&=\pr_{\cx_2^{{\delta_c}}}\left(x_{2,k}-\eta_k\frac{n}{\delta}J_2(\bar{x}_k)\mu_{2,k}\right),
    \end{align}
\end{subequations}
where ${\delta_c}=\delta(1+|c|(\bar{\Lambda}+\bar{w}))$, and we assume that $\delta,c$ are chosen such that $\delta_c<r_2$. The formulation of $\delta_c$ guarantees that even in the worst-case realization of $\Lambda_k$ and $w_k$, player 2's exploratory action remains feasible. We will call system \eqref{dx1}-\eqref{dx2} the \emph{deceptive bandit learning (DBL) dynamics}.

\begin{figure}[t!]
  \centering
    \vspace{0.2cm}\includegraphics[width=0.46\textwidth]{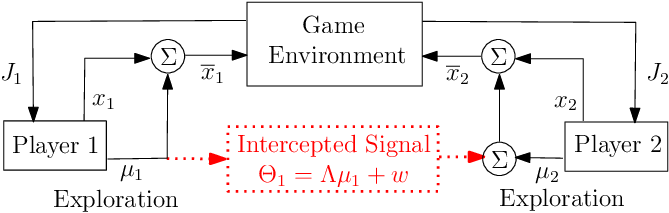}
    \caption{\small Exploration leakage and coupling in a two-player game with bandit feedback.}\label{sblock}
    \vspace{-0.4cm}
\end{figure}

The subtlety behind the proposed deception mechanism can be observed through the expected gradient signal, given by \tcb{$\widehat{g}_{i,k}=\frac{n}{\delta}J_i(\bar{x}_k)\mu_{i,k}$}. Computing the smoothed average of $\widehat{g}_{i,k}$ around \tcb{$x_k$ (details in the Appendix) yields:
\begin{subequations}\label{expg}
\begin{align}
        \ex_k[\widehat{g}_{1,k}]&=\nabla_1 J_1(x_k)+c\Gamma^\top \nabla_2 J_1(x_k)+\beta_{1,k}^\delta\label{expg1}\\
        \ex_k[\widehat{g}_{2,k}]&=\nabla_2 J_2(x_k)+\beta_{2,k}^\delta,
    \end{align}
\end{subequations}
where $\beta_{1,k}^\delta$ and $\beta_{2,k}^\delta$ are $\cf_k$-measurable $\mathcal{O}(\delta)$ smoothing biases, with bounds provided in Lemma \ref{lem_o} of the Appendix; such biases are present even in the nominal ($c=0$) algorithm. The distinctive effect of deception is the induced externality $c\Gamma^\top \nabla_2 J_1(x_k)$, which does not vanish with $\delta$.} The system \eqref{expg} reveals that while player 2 still learns their gradient up to the $\co(\delta)$ bias, player 1 now learns a flawed estimate affected by this externality.

\vspace{0.1cm}
\begin{remark}\label{rem1}
\tcb{The effect of deception is governed by the conditional cross-covariance between the oblivious player’s dither and the intercepted signal:
    \begin{equation*}       C_{\Theta,k}:=n\ex_k[\mu_{1,k}\Theta_{1,k}^\top].
    \end{equation*}
This correlation introduces the externality $cC_{\Theta,k}\nabla_2 J_1(x_k)$ into the conditional mean of the oblivious player's gradient estimator. Under Assumption \ref{as_th}, $C_{\Theta,k}=\Gamma^\top$ almost surely. Hence, although the conditional distribution of the signal may adapt to the pre-round history, its fixed cross-covariance produces the same deceptive bias at every iteration. Conversely, if $C_{\Theta,k}=0$, the intercepted signal cannot generate this equilibrium-shifting bias. 
This observation also reveals possible defense strategies that can be implemented by the victim, which include desynchronizing the game, applying a randomized zero-mean mask to the dither, etc. An in-depth study of defense strategies is left for future work.\QEDB
}
\end{remark}

\vspace{0.1cm}
\begin{remark}\label{hetero}
    If players use different exploration radii $\delta_1, \delta_2$ with $\delta_i=a_i\delta$, where $a_i>0$, and \eqref{dx2h} uses $c\delta_2\Theta_{1,k}$, then the externality in \eqref{expg1} becomes $c(a_2/a_1)\Gamma^\top \nabla_2 J_1(x_k)$, i.e., the effective gain is ${c}_\text{eff}=c(a_2/a_1)$. The computation in \eqref{expg} also covers the $N$-player case where player $d$ deceives a player $v$; dithers are independent, so the injected externality becomes $c\Gamma^\top\nabla_d J_v(x_k)$ and only affects player $v$'s gradient estimator (i.e., the $v$ block of the pseudogradient). \QEDB
\end{remark}

Motivated by \eqref{expg}, we propose the notion of a \emph{deceptive pseudogradient}, which is given by
\begin{equation}\label{decsg}
    \gdec(x)=\begin{bmatrix}
        \nabla_1 J_1(x)+c\Gamma^\top\nabla_2 J_1(x) \\ \nabla_2 J_2(x)
    \end{bmatrix}.
\end{equation}
For the modified learning dynamics to converge, $\gdec$ must remain well-behaved. Specifically, the injection of the deceptive externality $c\Gamma^\top \nabla_2 J_1(x)$ must not destroy the monotonicity of the nominal game. By leveraging the Lipschitz continuity of $\nabla_2 J_1(\cdot)$ obtained from Assumption \ref{assumps}, it is straightforward to verify that $\gdec$ remains strongly monotone as long as the following condition is satisfied:
\begin{equation}\label{cbd}
    \gamma_c:=\gamma-|c| L_{12} \norm{\Gamma}>0,
\end{equation}
where $L_{12}$ is the Lipschitz constant of $\nabla_2 J_1(\cdot)$. If \eqref{cbd} holds, there exists a unique Nash equilibrium of $(\gdec, \cx)$, which we call the \emph{deceptive Nash equilibrium (DNE)}. This is formalized in the following definition:
%\vspace{-5pt}
\begin{defn}
   Suppose that $\gdec$, given in \eqref{decsg}, is strongly monotone. The deceptive Nash equilibrium, denoted $x_c^*$, is the unique point in $\cx$ that satisfies the following
    \begin{equation}\label{videc}
    \iprod{\gdec(x_c^*)}{x-x_c^*}\ge 0,
\end{equation}
for all $x\in\cx$.\QEDB
\end{defn}
\vspace{-5pt}

Equations \eqref{decsg}, \eqref{videc} identify a new potential steady state induced by the deception mechanism. This raises two natural questions: do the DBL dynamics approach this DNE, and how does the DNE compare with the nominal NE? We answer these questions next.
\section{Main Results}
\label{sec_main}
We establish convergence guarantees for the DBL dynamics, then characterize when deception shifts the steady state and benefits the deceiver.
\subsection{Convergence Analysis}
Because the coupled gradient estimators now track $\gdec$, the first question is how closely the DBL iterates approach the DNE and at what rate. We provide both guarantees in the following theorem:
\begin{thm}\label{thm_conv}
    Consider the DBL dynamics \eqref{dx1}-\eqref{dx2} under Assumptions \ref{assumps}-\ref{as_th}, where \eqref{cbd} holds.
    Then, there exists $R>0$ such that, for any \tcb{$x_0\in\cx_1^\delta\times\cx_2^{\delta_c}$} and $\delta$ satisfying
    \begin{equation}\label{delta_bd}
        0<\delta<\min\left\{r_1,\frac{r_2}{1+|c|(\bar{\Lambda}+\bar{w})}\right\}:=r_c,
    \end{equation}
    the following bound holds
    \begin{equation}\label{limxk}
        \limsup_{k\to\infty}\ex[\norm{x _k-x_{c}^*}^2]\le R\delta^2.
    \end{equation}
    If the step sizes also satisfy $\eta_0\ge 0$ and $\eta_k=\frac{q}{\gamma_c k}$ \tcb{for $k\ge 1$}, where $q>1$ and $\gamma_c=\gamma-L_{12}|c|\norm{\Gamma}$, then there also exists $M>0$ such that the following holds
    \begin{equation}\label{limxkk}
        \ex[\norm{x _k-x_{c}^*}^2]\le \frac{M}{\delta^2 k}+R\delta^2,
    \end{equation}
    for all $k\in\N$ and $\delta\in (0,r_c)$.
\end{thm}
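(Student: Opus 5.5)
The argument is a standard one-step mean-square contraction for projected stochastic approximation. The main difficulty is that the comparison point $x_c^*$ need not lie in the contracted feasible set $\cx^\delta:=\cx_1^\delta\times\cx_2^{\delta_c}$. The plan is to work with an auxiliary point $x_{c,\delta}^*$, defined as the Nash equilibrium of $(\gdec,\cx^\delta)$. Since $\gdec$ is $\gamma_c$-strongly monotone by \eqref{cbd}, this point exists and is unique.

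\textbf{Step 1 (auxiliary equilibrium is close).} Show $\norm{x_{c,\delta}^*-x_c^*}\le C_0\delta$. Each point of $\cx_i$ lies within $\co(\delta)$ of its homothetic image, because $\norm{y-\pr_{\cx_i^\delta}(y)}\le (\delta/r_i)\,\mathrm{diam}(\cx_i)$, and similarly for $\delta_c$. We test the variational inequality \eqref{videc} at the image of $x_{c,\delta}^*$ and the $\cx^\delta$-variational inequality at the image of $x_c^*$. Adding the two and using strong monotonicity together with boundedness of $\gdec$ on compact $\cx$ gives $\gamma_c\norm{x_{c,\delta}^*-x_c^*}^2\le C\delta$. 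This only yields an $\co(\sqrt\delta)$ estimate, which is too weak. To sharpen it, we use that the contraction is an affine map $T_\delta(y)=p+(1-\delta/r)(y-p)$, so $T_\delta(x_c^*)\in\cx^\delta$. We then compare $x_{c,\delta}^*$ with $T_\delta(x_c^*)$ via the standard perturbation bound for strongly monotone variational inequalities on affinely rescaled sets, i.e.\ a Lipschitz operator composed with $T_\delta$. This should give $\co(\delta)$, and it is where the extra constant in $R$ comes from.

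\textbf{Step 2 (one-step recursion).} Let $e_k=x_k-x_{c,\delta}^*$. By nonexpansiveness of projection, and since $x_{c,\delta}^*$ is a fixed point of $\pr_{\cx^\delta}(\cdot-\eta\gdec(\cdot))$,
\[
\norm{e_{k+1}}^2\le\norm{e_k}^2-2\eta_k\iprod{\widehat g_k-\gdec(x_{c,\delta}^*)}{e_k}+\eta_k^2\norm{\widehat g_k-\gdec(x_{c,\delta}^*)}^2 .
\]
We take $\ex_k$ and apply \eqref{expg}. The conditional mean of $\widehat g_k$ is $\gdec(x_k)+\beta_k^\delta$, with $\norm{\beta_k^\delta}\le B\delta$ by Lemma \ref{lem_o}. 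Here only Lipschitz gradients are needed: the smoothing bias is $\co(\delta)$ under first-order smoothness, which is exactly the relaxation the paper claims. The cross term is then at most $-2\eta_k\gamma_c\norm{e_k}^2+2\eta_kB\delta\norm{e_k}$. By Young's inequality, $2B\delta\norm{e_k}\le\gamma_c\norm{e_k}^2/2+2B^2\delta^2/\gamma_c$. The second moment is bounded by $\co(n^2/\delta^2)$, since $J_i$ is bounded on a neighborhood of $\cx$ and $\Theta_{1,k}$ is bounded, so $\bar x_k$ stays feasible by the choice of $\delta_c$ and \eqref{delta_bd}. Taking full expectations gives
\[
a_{k+1}\le(1-\tfrac{3}{2}\gamma_c\eta_k)a_k+\tfrac{2B^2}{\gamma_c}\eta_k\delta^2+\tfrac{G}{\delta^2}\eta_k^2,\qquad a_k=\ex\norm{e_k}^2 .
\]
Centering at the deceptive operator rather than the nominal one is what makes the externality $c\Gamma^\top\nabla_2J_1$ disappear from the error. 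The contraction constant is $\gamma_c$ instead of $\gamma$.

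\textbf{Step 3 (conclusions).} Under Assumption \ref{as_eta}, the standard Chung/Robbins–Siegmund-type lemma gives $\limsup a_k\le\frac{4B^2}{3\gamma_c^2}\delta^2$, because $\eta_k^2/\eta_k\to0$ kills the $1/\delta^2$ noise term. Combining with Step 1 via $\norm{x_k-x_c^*}^2\le2\norm{e_k}^2+2C_0^2\delta^2$ yields \eqref{limxk}. For $\eta_k=q/(\gamma_c k)$ with $q>1$, the recursion becomes $a_{k+1}\le(1-\frac{3q}{2k})a_k+\frac{A\delta^2}{k}+\frac{G'}{\delta^2k^2}$. Induction on $a_k\le \frac{M'}{\delta^2k}+R'\delta^2$ works because $3q/2>1$. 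One checks $(1-\frac{3q}{2k})\frac{M'}{k}+\frac{G'}{k^2}\le\frac{M'}{k+1}$ for large $M'$, and the constant part is preserved with $R'=2A/(3q)$. The early iterates are absorbed into $M'$ using boundedness of $\cx$, with $\mathrm{diam}(\cx)^2\le M'/(\delta^2 k)$ for $k$ small after enlarging $M'$ by a $\delta$-independent factor, since $\delta<r_c$. Adding the Step 1 offset then gives \eqref{limxkk}. I expect the main obstacle to be Step 1, obtaining an $\co(\delta)$ rather than $\co(\sqrt\delta)$ displacement of the equilibrium under the set contraction.
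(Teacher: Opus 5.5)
Your proposal is correct and follows essentially the paper's route: the auxiliary equilibrium $x_{c,\delta}^*$ of $(\gdec,\cx_1^\delta\times\cx_2^{\delta_c})$, a one-step recursion from nonexpansiveness of the projection, $\gamma_c$-strong monotonicity plus Young's inequality on the $\co(\delta)$ smoothing bias, the standard comparison lemma for the $\limsup$, and an induction for the $1/k$ rate with the early iterates absorbed through the diameter of $\cx$. Your centering at $\gdec(x_{c,\delta}^*)$ and your factor $\tfrac32$ only change constants.

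The Step~1 estimate you leave as ``should give $\co(\delta)$'' is closed in the paper exactly by comparing $x_{c,\delta}^*$ with $T(x_c^*)$. The contraction must be blockwise, with factors $1-\delta/r_1$ and $1-\delta_c/r_2$ rather than a single $\delta/r$. This blockwise homothety preserves normal cones of the product set, so $T(x_c^*)$ violates the contracted variational inequality only by $\gdec(T(x_c^*))-\gdec(x_c^*)$. Strong monotonicity then yields $\gamma_c\norm{T(x_c^*)-x_{c,\delta}^*}\le L_c d_c\norm{x_c^*-p}$ with $d_c=\delta/r_c$.
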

\vspace{0.1cm}
\vspace{-5pt}
Thus, while player 1 remains oblivious and continues to execute the nominal bandit update, the joint dynamics converge to a neighborhood of the shifted DNE. From \eqref{limxkk}, we see that the DBL dynamics inherit the same $\co(d^2/{\ep^2})$ complexity guarantees from \cite{9992950}, where $d=2n=\text{dim}~\cx$:
%\vspace{-5pt}
\begin{cor}\label{cor_complex}
    Under the conditions of Theorem \ref{thm_conv} for which \eqref{limxkk} holds, for any $\ep>0$ sufficiently small, choosing $\delta^2=\frac{\ep}{(2R)}$ and $ k\ge\frac{4MR}{\ep^2}$ yields $\ex[\norm{x_k-x_c^*}^2]\le \ep$, where $M, R$ are given in Section \ref{convproof}. Consequently, the DBL dynamics attain the same $\co(d^2/\ep^2)$ complexity as the deception-free bandit algorithm from \cite{9992950}.
\end{cor}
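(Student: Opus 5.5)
The plan is to obtain the corollary directly from the finite-time bound \eqref{limxkk} of Theorem \ref{thm_conv}. That bound holds for every $k\in\N$ and every $\delta\in(0,r_c)$, so I only need to choose $\delta$ and $k$ so that each of its two terms is at most $\ep/2$.

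First I fix $\delta^2=\frac{\ep}{2R}$. This gives $R\delta^2=\ep/2$ exactly. For this choice to be admissible we need $\delta<r_c$, that is, $\ep<2Rr_c^2$. This is exactly where ``$\ep$ sufficiently small'' enters, and I would state the explicit threshold $\ep<2Rr_c^2$. Since $M$ and $R$ from the proof of Theorem \ref{thm_conv} are independent of $\delta$ and $k$, the substitution is legitimate.

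Next I bound the transient term:
\begin{equation*}
\frac{M}{\delta^2 k}=\frac{2MR}{\ep k}.
\end{equation*}
Requiring this to be at most $\ep/2$ gives $k\ge \frac{4MR}{\ep^2}$. Adding the two halves yields $\ex[\norm{x_k-x_c^*}^2]\le\ep$.

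For the complexity claim, I would trace how $M$ and $R$ scale with the dimension. The second moment of the one-point estimator $\frac{n}{\delta}J_i(\bar x_k)\mu_{i,k}$ is $\co(n^2/\delta^2)$ with $n=d/2$, and this is what produces $M=\co(d^2)$. The bias constant $R$ comes from the Lipschitz smoothing bias in Lemma \ref{lem_o}, which has no dimension dependence beyond constants. Hence $k=\co(d^2/\ep^2)$, matching \cite{9992950}. The only delicate step is confirming that the constants in Section \ref{convproof} scale this way and not worse, and that the gain $c$ affects them only through $\gamma_c$, $\bar\Lambda$ and $\bar w$. I would verify this by inspecting the variance bound in the recursion used to prove \eqref{limxkk}.
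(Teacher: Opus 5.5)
Your argument is correct and is the direct substitution the paper intends. With $\delta^2=\ep/(2R)$, which is admissible once $\ep<2Rr_c^2$ so that $\delta<r_c$, both terms of \eqref{limxkk} are at most $\ep/2$ for $k\ge 4MR/\ep^2$, and the explicit constant $M=4Cn^2\cj^2/\gamma_c^2$ from Section~\ref{convproof} supplies the $n^2=d^2/4$ factor you anticipated. One small correction: $R$ is not only the smoothing-bias contribution $2\alpha/\gamma_c^2$ from Lemma~\ref{lem_o}; it also contains the contraction term $\frac{2}{r_c^2}\bigl(1+\frac{L_c}{\gamma_c}\bigr)^2\norm{x_c^*-p}^2$ from Lemma~\ref{xcdbd}, which is likewise independent of $\delta$ and $k$, so your conclusion is unaffected.
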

%\vspace{-3pt}
\begin{remark} Corollary \ref{cor_complex} demonstrates that the dynamics remain efficient even under adversarial manipulation.
By treating the smoothing bias in $\ex_k[\widehat{G}_k]$ as a bounded additive perturbation (see the proof of Theorem \ref{thm_conv} in Section \ref{convproof}), we establish these optimal rates while relaxing the assumption of a Lipschitz Jacobian for the pseudogradient, which \cite{9992950} needed for proving $\co(d^2/{\ep^2})$ complexity. The same bounded-perturbation argument extends to $N$-player games. To our knowledge, our derived relaxation is also novel in the deception-free ($c=0$) setting.\QEDB 
\end{remark}
\vspace{3pt}
\begin{remark}
    Deception is not free. If $c$ satisfies \eqref{cbd}, then the strong monotonicity of $\gdec$, with \eqref{varineq} and \eqref{videc}, implies:
    \begin{equation}\label{xcxbd}
        \norm{x_c^*-x^*}\le \frac{|c|}{\gamma_c}\norm{\Gamma^\top\nabla_2 J_1(x^*)}.
    \end{equation}
 So while larger $|c|$ permits larger equilibrium displacement and potential long-term expected cost improvement, it also shrinks $\gamma_c$, contracts player 2's feasible set $\cx_2^{\delta_c}$, and inflates the smoothing bias bounds in \eqref{bias_bd}; this effectively degrades the constants $M$ and $R$ in Theorem \ref{thm_conv}. Hence, selecting $c$ itself presents a design trade-off for the deceiver. \QEDB
\end{remark}

\subsection{Characterization of the DNE and Immunity Conditions}
We established convergence to a neighborhood of the DNE, which is characterized by the variational inequality \eqref{videc}. It is evident from \eqref{videc} that the steady state conditions are altered under deception, but a fundamental question remains: when is the DNE actually different from the nominal NE? Equation \eqref{xcxbd} bounds the magnitude of this shift, but we now ask when it is nonzero. Furthermore, in what situations does deception actually improve the cost of the deceiver?

To isolate the effect of the deception from the complexities of active constraints, our vulnerability characterization focuses on interior nominal NE, which is a practical operating condition in many real-world settings \cite{6060862, 9992906, perolat2021poincare}. Specifically, we find that interior NE are locally steerable via deception whenever the externality $\Gamma^\top\nabla_2 J_1(\cdot)$ is nonzero at the nominal NE. Conversely, the equilibrium is invariant whenever $\Gamma^\top \nabla_2 J_1(x^*)=0$, even if $x^*$ is not necessarily an interior NE. Thus, the zero-correlation condition $\Gamma=0$ from Remark \ref{rem1} provides a sufficient universal defense against this particular deception mechanism. Meanwhile, nonzero
$\Gamma$ can still yield immunity if
$\operatorname{range}(\Gamma)\perp\nabla_2J_1(x^\ast)$, i.e., when the leakage carries no information about the externality.
We formalize these ideas in the following theorem:

\begin{figure*}[t!]
    \centering

    \begin{minipage}[c]{0.337\textwidth}
        \centering
        \includegraphics[
            width=\linewidth,
            trim=2mm 1mm 2mm 1mm,
            clip
        ]{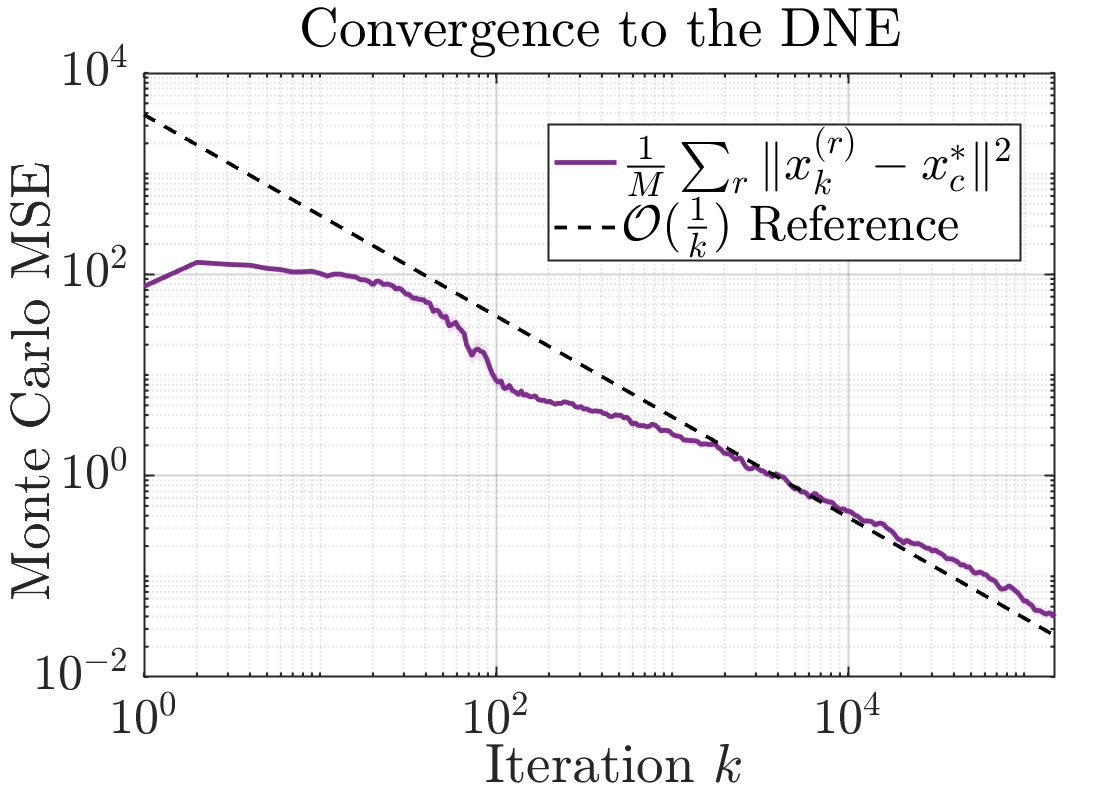}
    \end{minipage}\hspace{-0.35cm}%
    \begin{minipage}[c]{0.337\textwidth}
        \centering
        \includegraphics[
            width=\linewidth,
            trim=2mm 1mm 2mm 1mm,
            clip
        ]{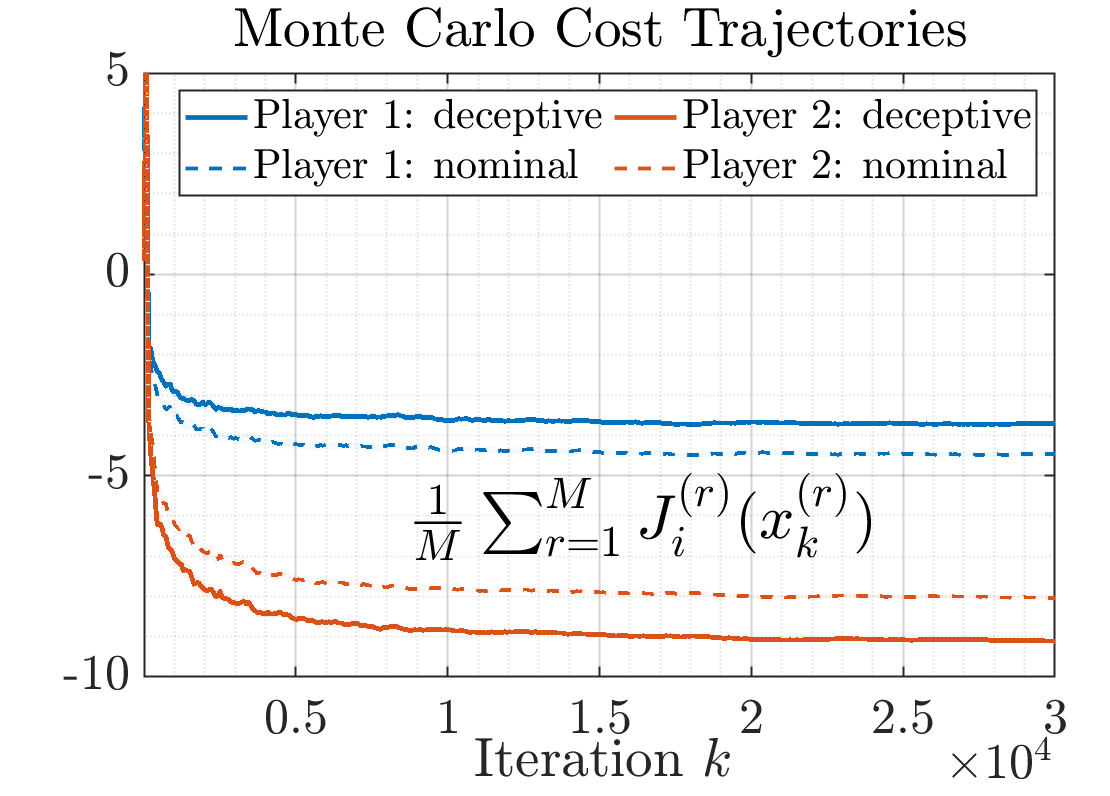}
    \end{minipage}\hspace{-0.15cm}%
    \begin{minipage}[c]{0.337\textwidth}
        \centering
        \includegraphics[
            width=\linewidth,
            trim=2mm 1mm 2mm 1mm,
            clip
        ]{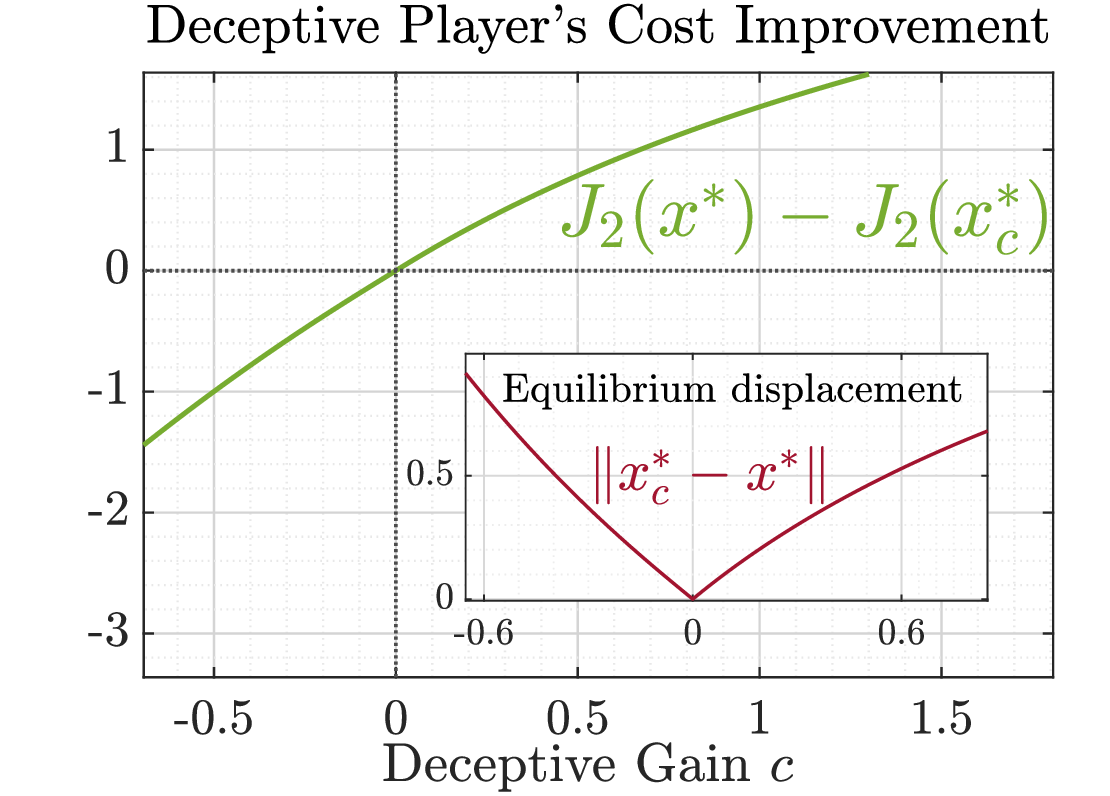}
    \end{minipage}

    \vspace{-0.1cm}
    \caption{\small
    Monte Carlo study over $M=300$ realizations.
    Left: convergence to a neighborhood of the DNE.
    Middle: nominal and deceptive cost trajectories.
    Right: the deceptive player's steady-state cost change as a
    function of \(c\), and the equilibrium
    displacement.}
    \label{figs}
    \vspace{-0.5cm}
\end{figure*}
%\vspace{-5pt}
\begin{thm}\label{thm_inj}
    Suppose that Assumptions \ref{assumps} and \ref{as_th} hold. If $x^*\in\text{int}~\cx$ and $\Gamma^\top\nabla_2 J_1(x^*)\neq 0$, then the mapping $c\mapsto x^*_c$ is injective on a neighborhood of 0. If instead $\Gamma^\top\nabla_2 J_1(x^*)= 0$, then $x_c^*=x^*$ for all $c$ where $\gdec$ is strongly monotone.
\end{thm}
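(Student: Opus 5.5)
The plan is to treat the second claim first, since it is a direct verification, and then handle injectivity near $c=0$ with a local argument around the interior NE.

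\emph{Invariance when $\Gamma^\top\nabla_2 J_1(x^*)=0$.} Fix any $c$ for which $\gdec$ is strongly monotone, so the DNE $x_c^*$ exists and is unique. Evaluating \eqref{decsg} at $x^*$, the externality term vanishes by hypothesis, so $\gdec(x^*)=\cg(x^*)$. Then \eqref{varineq} gives $\iprod{\gdec(x^*)}{x-x^*}\ge 0$ for all $x\in\cx$, which is exactly \eqref{videc}. By uniqueness, $x_c^*=x^*$. This holds whether or not $x^*$ is interior.

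\emph{Injectivity when $x^*\in\text{int}~\cx$ and $v:=\Gamma^\top\nabla_2 J_1(x^*)\neq 0$.}

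First, show that $x_c^*$ is interior for small $|c|$. For $|c|$ small enough, \eqref{cbd} holds, say with $\gamma_c\ge\gamma/2$. The bound \eqref{xcxbd} then gives $\norm{x_c^*-x^*}\le 2|c|\norm{v}/\gamma$. Since $x^*$ is interior, $x_c^*\in\text{int}~\cx$ for all $|c|<c_0$, for some $c_0>0$. At an interior point, the variational inequality \eqref{videc} reduces to the equation $\gdec(x_c^*)=0$, because the inequality must hold for both $\pm$ directions. Equivalently,
\begin{equation*}
\nabla_1 J_1(x_c^*)+c\Gamma^\top\nabla_2 J_1(x_c^*)=0,\qquad \nabla_2 J_2(x_c^*)=0 .
\end{equation*}

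Next, argue by contradiction. Suppose $c\neq c'$ in $(-c_0,c_0)$ satisfy $x_c^*=x_{c'}^*=:y$. Subtracting the first block equations gives $(c-c')\Gamma^\top\nabla_2 J_1(y)=0$, hence $\Gamma^\top\nabla_2 J_1(y)=0$. The equations then say $\cg(y)=0$ with $y$ interior, so $y$ satisfies \eqref{varineq}. By uniqueness of the NE, $y=x^*$, and therefore $\Gamma^\top\nabla_2 J_1(x^*)=0$, contradicting $v\neq0$. Hence $c\mapsto x_c^*$ is injective on $(-c_0,c_0)$.

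This argument uses only continuity, not differentiability, so the relaxed smoothness of Assumption~\ref{assumps} is not an obstacle. No implicit function theorem, and hence no Jacobian, is needed.

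\emph{Main obstacle.} The only delicate step is guaranteeing interiority of $x_c^*$, which is what lets us replace the variational inequality by an equation. This is exactly what \eqref{xcxbd}, together with $\gamma_c$ bounded below, provides, so the neighborhood of $0$ must be chosen small enough that both \eqref{cbd} holds and the displacement ball stays inside $\cx$.
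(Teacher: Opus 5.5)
Your proposal is correct and follows the paper's proof essentially step for step: interiority of $x_c^*$ for small $|c|$ via \eqref{xcxbd}, reduction of \eqref{videc} to $\gdec(x_c^*)=0$, and subtraction of the equations for two values of $c$ to force $\cg(\hat{x})=0$ and contradict uniqueness of the NE; the second claim is handled, as in the paper, by $\gdec(x^*)=\cg(x^*)$ together with uniqueness of the DNE. Your explicit requirement $\gamma_c\ge\gamma/2$ just makes concrete the paper's step of shrinking $(-\ep,\ep)$ to a neighborhood on which $x_c^*$ stays interior.
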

\begin{remark}
The two immunity conditions in Theorem \ref{thm_inj} differ in robustness:
$\Gamma=0$ depends only on the leakage channel and is enforceable by
the oblivious player via Remark \ref{rem1}, while
$\operatorname{range}(\Gamma)\perp\nabla_2J_1(x^*)$ depends on the
nominal NE and is, hence, unreliable as a defense.\QEDB
\end{remark}
\subsection{Impact of Deception}
While Theorem \ref{thm_inj} tells us when the deceiver can steer the dynamics to a new steady state, it is common in practical scenarios that the deceiver is interested in improving their own well-being. If the mapping $c\to x_c^*$ is differentiable around 0, we can evaluate the local sensitivity of $J_2$ to $c$. Thus, the question of whether deception is profitable naturally reduces to characterizing this derivative and identifying when it does not vanish. These conditions are formalized below:
%\vspace{-3pt}
\begin{thm}\label{thm_imp}
    Suppose that Assumptions \ref{assumps} and \ref{as_th} hold, where $\gdec$ is continuously differentiable and $x^*\in\text{int}~\cx$. If the following holds:
    \begin{equation}\label{implicit}
     \begin{bmatrix} \nabla_1 J_2(x^*) \\ 0 \end{bmatrix}^\top \nabla\mathcal{G}(x^*)^{-\top} \begin{bmatrix} \Gamma^\top \nabla_2 J_1(x^*) \\ 0 \end{bmatrix}\neq 0,
    \end{equation}
    there exists a nonempty open interval $\ce\subset \re$ such that, for all $c\in \ce$, $\gdec$ is strongly monotone and $J_2(x_c^*)<J_2(x^*)$.
\end{thm}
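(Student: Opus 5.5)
The plan is to use the implicit function theorem on the first-order condition characterizing the DNE near the interior nominal NE, and then take a first-order expansion of $c\mapsto J_2(x_c^*)$ around $c=0$.

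\emph{Step 1 (interior reduction).} Since $x^*\in\text{int}~\cx$, the variational inequality \eqref{varineq} reduces to $\cg(x^*)=0$. Consider $F(x,c):=\gdec(x)=\cg(x)+c\begin{bmatrix}\Gamma^\top\nabla_2 J_1(x)\\0\end{bmatrix}$. By hypothesis this is $C^1$ in $x$ (at least for $c$ near a nonzero value; taking $c$ fixed and differentiating, $\cg$ itself is then $C^1$ as well), and it is affine in $c$. Strong monotonicity of $\cg$ gives $v^\top\nabla\cg(x^*)v\ge\gamma\norm{v}^2$, so $\nabla\cg(x^*)$ is invertible.

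\emph{Step 2 (implicit function theorem).} The implicit function theorem yields a $C^1$ curve $\tilde x(c)$ with $\tilde x(0)=x^*$ and $F(\tilde x(c),c)=0$ for $|c|$ small. By continuity, $\tilde x(c)\in\text{int}~\cx$ for such $c$. Restricting to $|c|<\gamma/(L_{12}\norm{\Gamma})$ makes \eqref{cbd} hold, so $\gdec$ is strongly monotone. A zero of $\gdec$ in $\cx$ satisfies \eqref{videc}, so uniqueness gives $x_c^*=\tilde x(c)$. Differentiating at $c=0$ gives
\begin{equation*}
\frac{dx_c^*}{dc}\Big|_{c=0}=-\nabla\cg(x^*)^{-\top}\begin{bmatrix}\Gamma^\top\nabla_2 J_1(x^*)\\0\end{bmatrix},
\end{equation*}
where the transpose comes from the paper's convention that $\nabla f$ is the transposed Jacobian. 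I expect the main bookkeeping obstacle to be matching this transpose convention to the exact form in \eqref{implicit}.

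\emph{Step 3 (cost sensitivity).} Set $\phi(c):=J_2(x_c^*)$, which is $C^1$ near $0$. By the chain rule, $\phi'(0)=\nabla J_2(x^*)^\top \frac{dx_c^*}{dc}|_{c=0}$. Since $\nabla_2J_2(x^*)=0$ at the interior NE, only the $\nabla_1 J_2(x^*)$ block survives, and $\phi'(0)$ equals minus the left-hand side of \eqref{implicit}, which is nonzero by hypothesis. If $\phi'(0)<0$, take $\ce=(0,\epsilon)$; if $\phi'(0)>0$, take $\ce=(-\epsilon,0)$. Since $\phi(c)=\phi(0)+\phi'(0)c+o(c)$, for $\epsilon$ small enough we get $\phi(c)<\phi(0)=J_2(x^*)$ on $\ce$, and $\epsilon$ can also be chosen inside the strong-monotonicity range from Step 2.

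The only delicate points are checking that $x_c^*$ coincides with the IFT branch, which holds by interiority plus uniqueness, and getting the transpose right in Step 2.
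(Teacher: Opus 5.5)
Your proposal is correct and follows the paper's proof essentially step for step. Both arguments apply the implicit function theorem at the interior NE to get the derivative of $c\mapsto x_c^*$, use the chain rule with $\nabla_2 J_2(x^*)=0$ to identify $\frac{d}{dc}J_2(x_c^*)\big|_{c=0}$ with minus the left-hand side of \eqref{implicit}, and pick a one-sided interval $\ce$ by its sign inside $|c|<\gamma/(L_{12}\norm{\Gamma})$. You also supply details the paper leaves implicit, namely invertibility of $\nabla\cg(x^*)$ from strong monotonicity and identification of the IFT branch with $x_c^*$ via interiority and uniqueness; however, $C^1$ regularity of $\cg$ should be taken from the hypothesis read for all $c$ near $0$, since it cannot be deduced from a single fixed $c\neq 0$ as your Step 1 parenthetical suggests.
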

%\vspace{-3pt}

Theorem \ref{thm_imp} provides some important geometric intuition behind the mechanics of an adversarially favorable shift. Naturally, for \eqref{implicit} to hold, the term $\Gamma^\top\nabla_2 J_1(x^*)$ cannot be zero. This parallels Theorem \ref{thm_inj}, which established that this nonzero externality was needed for injectivity of the mapping $c\mapsto x_c^*$. Without this condition, the DNE remains at $x^*$, which implies that the steady state cost function values will also be unchanged. Surprisingly, condition \eqref{implicit} also requires that the other cross derivative $\nabla_1 J_2(\cdot)$ is nonzero at the NE. Intuitively, this makes sense: if the deceiver's cost is completely insensitive to the victim's actions at the nominal equilibrium, shifting the victim's steady-state behavior yields no immediate benefit. Theorem \ref{thm_conv} then tells us that the cost improvements in Theorem \ref{thm_imp} can be realized in the long-term expected DBL trajectories whenever $\delta$ is sufficiently small.
\section{Proofs of Main Results}
\subsection{Proof of Theorem \ref{thm_conv}}\label{convproof}
Let $\cx^\delta_c:=\cx_1^\delta\times\cx_2^{\delta_c}$, and let $x_{c,\delta}^*$ denote the unique Nash equilibrium of $(\gdec, \cx^\delta_c)$. Define $\widehat{G}_k=\frac{n}{\delta}(J_1(\bar{x}_k)\mu_{1,k}, J_2(\bar{x}_k)\mu_{2,k})$.
   Letting $e_k=x_k-x^*_{c,\delta}$, for each $k\ge 0$ we have:
    \begin{align}
        \ex[\norm{x _k-x_{c}^*}^2]&\le 2 \ex\left[\norm{e_k}^2\right]+2\norm{x_{c,\delta}^*-x_c^*}^2.\label{limex}
    \end{align}
    \tcb{By construction, $x_k$ and $e_k$ are $\cf_k$-measurable for every $k$.} To upper bound \eqref{limex}, let us consider the first term, since the second term can be upper bounded with Lemma \ref{xcdbd} in the Appendix. Since $x_{c,\delta}^*\in\cx^\delta_c$, we can leverage the non-expansiveness of the projection operator to obtain:
    \begin{align*}
         &\ex_k\left[\norm{e_{k+1}}^2\right]\le \ex_k\left[\norm{x_k-\eta_k\widehat{G}_k-x_{c,\delta}^*}^2\right]\\
         &~~~~\le \ex_k\left[\norm{e_k}^2\right]+\eta_k^2\ex_k\left[\norm{\widehat{G}_k}^2\right]-2\eta_k\iprod{\ex_k\left[\widehat{G}_k\right]}{e_k}
         .
    \end{align*}
Let \tcb{$\Delta_k^\delta=(\beta_{1,k}^\delta, \beta_{2,k}^\delta)$, where $\beta_{1,k}^\delta$ and $\beta_{2,k}^\delta$} are from \eqref{expg} and satisfy the bounds given in Lemma \ref{lem_o} from the Appendix. Let \tcb{$\cj=\sup_{i\in\{1,2\}}\sup_{x\in\cx}|J_i(x)|$, which is finite}. Since $\ex_k[\widehat{G}_k]=\gdec(x_k)+\Delta_k^\delta$, we obtain:
\begin{align*}
    \ex_k\left[\norm{e_{k+1}}^2\right]
         &\le \ex_k\left[\norm{e_k}^2\right]-2\eta_k\ex_k\left[\iprod{\gdec(x_k)}{e_k}\right]\\
         &~~~ +\frac{2n^2\eta_k^2\cj^2}{\delta^2}-2\eta_k\ex_k\left[\iprod{\Delta_k^\delta}{e_k}\right] \text{ a.s.}
\end{align*}
By the definition of $e_k$ and $x_{c,\delta}^*$, we know that the following holds for all $x_k\in\cx_c^\delta$:
\begin{equation}\label{gcmon}
    \iprod{\gdec(x_k)}{e_k}\ge \gamma_c \norm{e_k}^2.
\end{equation}
We can also apply Young's inequality to obtain
\begin{align}\label{dyg}
    |\iprod{\Delta_k^\delta}{e_k}|&\le \frac{\gamma_c}{2}\norm{e_k}^2+\frac{1}{2\gamma_c}\norm{\Delta_k^\delta}^2.
\end{align}
By Lemma \ref{lem_o} in the Appendix, there exists $\alpha>0$ such that $\norm{\Delta_k^\delta}^2\le \alpha\delta^2$ almost surely for each $k$.
We can then combine \eqref{gcmon} with \eqref{dyg} and take the total expectation over $\cf_k$ to obtain
\begin{align}\label{exk1bd}
    \ex\left[\norm{e_{k+1}}^2\right]
         &\le (1-\eta_k\gamma_c)\ex\left[\norm{e_k}^2\right]+\frac{2n^2\eta_k^2\cj^2}{\delta^2}+\frac{\alpha\eta_k\delta^2}{\gamma_c}.
\end{align}
A standard stochastic approximation comparison \cite{polyak} gives:
\begin{equation}\label{limxcbd}
    \limsup_{k\to\infty}\ex\left[\norm{x_{k}-x^*_{c,\delta}}^2\right]\le \frac{\alpha\delta^2}{\gamma_c^2}.
\end{equation}
We combine \eqref{limex}, \eqref{limxcbd}, and \eqref{dxcbd} from Lemma \ref{xcdbd} in the Appendix to prove \eqref{limxk}, with $R=\frac{2\alpha}{\gamma_c^2}+\frac{2}{r_c^2}(1+\frac{L_c}{\gamma_c})^2\norm{x_c^*-p}^2$. To establish \eqref{limxkk}, we apply Lemma \ref{lem4} in the Appendix to \eqref{exk1bd} and obtain $C>0$ such that the following estimate holds for all $k\in\N$ and $\delta\in (0, r_c)$:
\begin{equation}\label{exxkkk}
    \ex\left[\norm{x_k-x_{c,\delta}^*}^2\right]\le \frac{2n^2C\cj^2}{\gamma_c^2\delta^2 k}+\frac{\alpha\delta^2}{\gamma_c^2}.
\end{equation}
We substitute \eqref{exxkkk} into \eqref{limex} and apply Lemma \ref{xcdbd} from the Appendix to \eqref{limex} and obtain \eqref{limxkk}, with $C=\max\left\{\frac{q^2}{q-1}, \frac{\gamma_c^2 r_c^2\ceil{q}\max_{x,y\in\cx}\norm{x-y}^2}{2n^2\cj^2}\right\}$, $M=\frac{4Cn^2\cj^2}{\gamma_c^2}$.\pQEDB

\subsection{Proof of Theorem \ref{thm_inj}}
We start by proving the first part. Since $x^*\in\text{int}~\cx$, we know that $\cg(x^*)=0$. Let $\ep=\frac{\gamma}{L_{12} \norm{\Gamma}}$, and we define the mapping $g:(-\ep, \ep)\to\cx$ given by $g(c)=x_c^*$. Since $x^*\in\text{int}~\cx$ and \eqref{xcxbd} holds for $c\in (-\ep,\ep)$, there exists a neighborhood $\ce\subset (-\ep, \ep)$ of 0 such that $g(\ce)\subset\text{int}~\cx$, and hence $\gdec(g(c))=0$ for all $c\in \ce$. If we consider some $c_1, c_2\in \ce$ with $c_1\neq c_2$, we have 
    \begin{equation}\label{ginj}
        \cg(g(c_i))+c_i\begin{bmatrix}
            \Gamma^\top\nabla_2 J_1(g(c_i))\\ 0
        \end{bmatrix}=0,
    \end{equation}
    for $i\in\{1,2\}$. If we suppose that $g(c_1)=g(c_2)=\hat{x}$, we consider the equation \eqref{ginj} for $i=1$, $i=2$ and subtract to obtain
    \begin{equation}\label{ginj2}
        (c_1-c_2)\begin{bmatrix}
            \Gamma^\top\nabla_2 J_1(\hat{x})\\ 0
        \end{bmatrix}=0.
    \end{equation}
    Since $c_1-c_2\neq 0$, the equation \eqref{ginj2} is only valid if $\Gamma^\top\nabla_2 J_1(\hat{x})=0$. This would imply that $\cg(\hat{x})=0$, so by the uniqueness of the NE, we conclude that $\hat{x}=x^*$. However, we also assumed that $\Gamma^\top\nabla_2 J_1(x^*)\neq 0$, so we arrive at a contradiction. Hence, $g$ is injective on $\ce$, which proves the first part. For the second part,
$\Gamma^\top\nabla_2J_1(x^\ast)=0$ implies
$\gdec(x^\ast)=\cg(x^\ast)$.
Since $x^\ast$ solves \eqref{varineq}, it also solves \eqref{videc}. Strong monotonicity of
$\gdec$ then yields $x_c^\ast=x^\ast$.\pQEDB

\subsection{Proof of Theorem \ref{thm_imp}}
Since $\cg$ is strongly monotone, the implicit function theorem yields $\left.\frac{\partial x_c^*}{\partial c}\right|_{c=0}=-\nabla\mathcal{G}(x^*)^{-\top} \begin{bmatrix} \Gamma^\top \nabla_2 J_1(x^*) \\ 0 \end{bmatrix}.$ Since $x^*\in\text{int} ~\cx$, also have $\nabla_2 J_2(x^*)=0$.
    A straightforward application of chain rule reveals that the left hand side of \eqref{implicit} precisely equals $-\frac{\partial J_2(x_c^*)}{\partial c}|_{c=0}$. If $\frac{\partial J_2(x_c^*)}{\partial c}|_{c=0}>0$, we can find $\widetilde{\ep}\in (0,\ep)$ such that the result holds for $\ce=(-\widetilde{\ep}, 0)$, \tcb{where $\ep$ is obtained from the proof of Theorem \ref{thm_inj}.} Otherwise, we will have $\ce=(0, \widetilde{\ep})$.\pQEDB
\section{Numerical Example}
\label{sec_num}

To illustrate our results, we apply them to a stylized two-player resource allocation problem. Consider two players allocating demand across two shared resources. Player $i$ has action $x_i\in\re^2$, where the $j$-th component of their action specifies the amount of demand assigned to resource $j$. The cost function of player $i$ is expressed in the form
\begin{equation*}
    J_i(x)=\norm{x_i}^2+x_i^\top x_{3-i}+q_i^\top x_i+\lambda\sum_{j=1}^2 [x_{ij}-\tau_{ij}]_+^\frac52,
\end{equation*}
where $[\cdot]_+:=\max\{0,\cdot\}$ and the first three terms model private allocation costs, cross-player congestion, and linear allocation value, respectively. The final term is a soft capacity penalty, where $\tau_{ij}$ is player $i$'s reference allocation for resource $j$. Here, we will use $\lambda=0.12$, $q_1=(-4,-5), q_2=(-6,-4), \tau_1=(\frac23,2)$, $\tau_2=(\frac83,1)$, and $\cx_1=\cx_2=[0, 5]^2$. This example satisfies Assumption \ref{assumps} and falls outside the class of second-order smoothness conditions required in \cite{9992950}.

\tcb{To seek the Nash equilibrium $x^*=((2/3,2),(8/3,1))$, each player uses a learning algorithm based on bandit feedback. Player 1 implements \eqref{dx1}, while, at each stage, player 2 obtains noisy measurements of $\mu_{1,k}$ with probability $0.75$, which are stored in a signal $\Theta_{1,k}\in\re^2$ that assumes the form given in \eqref{th1k}. We have $\Lambda_k=b_k(\id_2+\Xi_k)$ where $b_k\sim\text{Bernoulli}(0.75)$ and, for each $k\in\N$, each entry of the random matrix $\Xi_k$ is sampled independently and uniformly on $[-0.15, 0.15]$. The additive noise satisfies $w_k=0.1 b_k z_k$, where the components of $z_k$ are independent standard Gaussian random variables truncated to $[-3,3]$. Player 2 implements the dynamics \eqref{dx2} with $c=0.8$, and each player uses $\delta=0.25$ with step sizes $\eta_k={3}{k^{-1}}$. It is verifiable that $\gdec$ is strongly monotone and the DNE is given by $(x_c^*)_1\approx(0.48,1.44)$ and $(x_c^*)_2\approx(2.76, 1.26)$. Since $J_2(x^*)-J_2(x_c^*)\approx 1.15$, player 2 improves their steady state cost through deception. The averaged trajectories of $J_i$ and $x$ taken over 300 realizations are given in Figure \ref{figs}, which also illustrates how the parameter $c$ affects the steady-state costs and equilibrium shift.} Figure \ref{counter} shows that countermeasures that suppress the same-round dither-signal correlation, which include an induced dither-signal delay, zero-mean masking, and dither resetting, recover the nominal learning trajectories and converge to a neighborhood of the NE, while the undefended dynamics remain displaced.

\begin{figure}[t!]
  \centering
    \vspace{0.2cm}\includegraphics[width=0.43\textwidth]{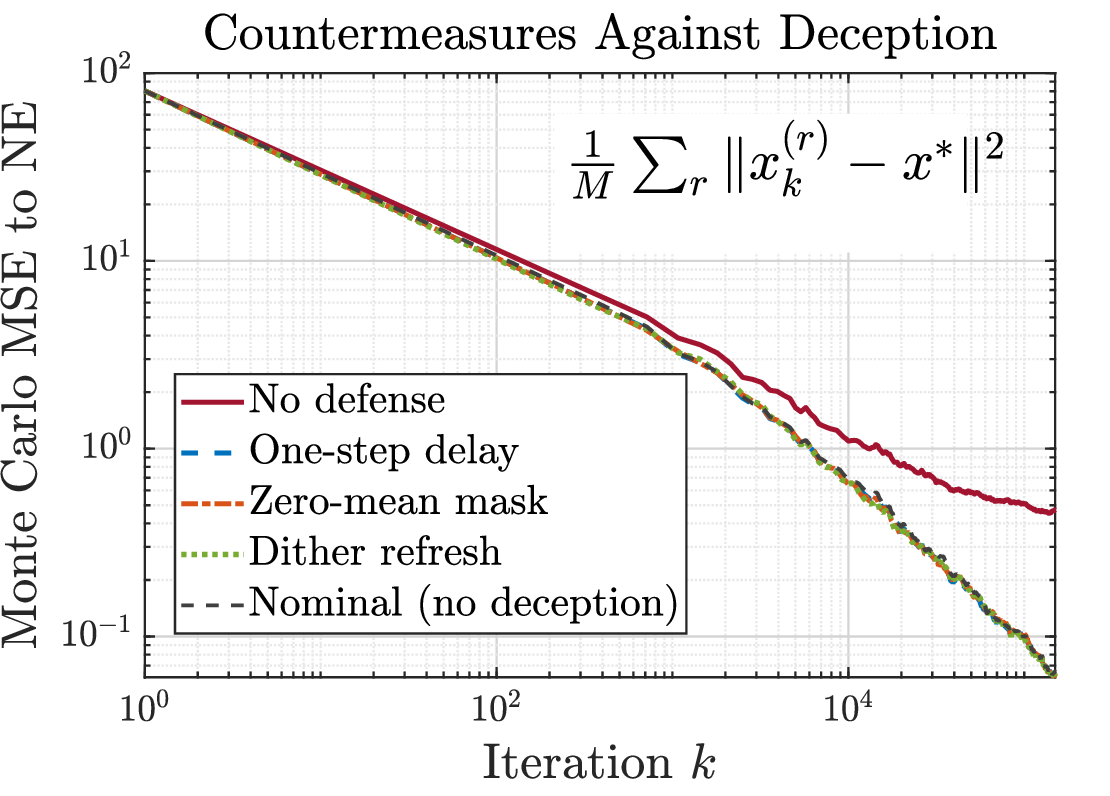}
    \vspace{-0.4cm}
    \caption{\small Convergence of the action profile with deception countermeasures implemented at each stage. One-step delay: player 2 uses $\Theta_{1,k-1}$ during stage $k$. Zero-mean mask: player 1 privately flips the sign of the dither with probability $\frac12$ after leakage. Dither refresh: player 1 resamples the dither after leakage.}\label{counter}
    \vspace{-0.4cm}
\end{figure}
\section{Conclusion}
\label{sec_conc}
We introduced a novel framework for deceptive bandit learning in strongly monotone games, demonstrating that noisy information correlated with an opponent’s exploration can induce an externality that steers the joint action profile toward a deceptive Nash equilibrium. We then proved that the DBL dynamics converge to a $\co(\delta)$ RMS neighborhood of the DNE, establishing optimal rates under weaker regularity conditions than \cite{9992950} and revealing a novel relaxation that also applies in the deception-free setting. Furthermore, we established geometric conditions under which the deception mechanism is guaranteed to shift the steady state and improve the deceiver's long-term cost/payoff. Our results were illustrated in a two-player resource allocation setting. Future directions include extending the ideas to asynchronous games, studying the effect of the network topology, and investigating active countermeasures against deception. 

\bibliographystyle{IEEEtran}
\bibliography{Biblio.bib}

% Generated by IEEEtran.bst, version: 1.14 (2015/08/26)
\begin{thebibliography}{10}
\providecommand{\url}[1]{#1}
\csname url@samestyle\endcsname
\providecommand{\newblock}{\relax}
\providecommand{\bibinfo}[2]{#2}
\providecommand{\BIBentrySTDinterwordspacing}{\spaceskip=0pt\relax}
\providecommand{\BIBentryALTinterwordstretchfactor}{4}
\providecommand{\BIBentryALTinterwordspacing}{\spaceskip=\fontdimen2\font plus
\BIBentryALTinterwordstretchfactor\fontdimen3\font minus \fontdimen4\font\relax}
\providecommand{\BIBforeignlanguage}[2]{{%
\expandafter\ifx\csname l@#1\endcsname\relax
\typeout{** WARNING: IEEEtran.bst: No hyphenation pattern has been}%
\typeout{** loaded for the language `#1'. Using the pattern for}%
\typeout{** the default language instead.}%
\else
\language=\csname l@#1\endcsname
\fi
#2}}
\providecommand{\BIBdecl}{\relax}
\BIBdecl

\bibitem{flaxman2005online}
A.~Flaxman, A.~Kalai, and H.~McMahan, ``Online convex optimization in the bandit setting: Gradient descent without a gradient,'' in \emph{Proceedings of the Annual ACM-SIAM Symposium on Discrete Algorithms}, 2005, pp. 385--394.

\bibitem{landgren2021distributed}
P.~Landgren, V.~Srivastava, and N.~E. Leonard, ``Distributed cooperative decision making in multi-agent multi-armed bandits,'' \emph{Automatica}, vol. 125, p. 109445, 2021.

\bibitem{liu2010distributed}
K.~Liu and Q.~Zhao, ``Distributed learning in multi-armed bandit with multiple players,'' \emph{IEEE transactions on signal processing}, vol.~58, no.~11, pp. 5667--5681, 2010.

\bibitem{shahrampour2017multi}
S.~Shahrampour, A.~Rakhlin, and A.~Jadbabaie, ``Multi-armed bandits in multi-agent networks,'' in \emph{2017 IEEE International Conference on Acoustics, Speech and Signal Processing (ICASSP)}.\hskip 1em plus 0.5em minus 0.4em\relax IEEE, 2017, pp. 2786--2790.

\bibitem{zhu2023distributed}
J.~Zhu and J.~Liu, ``Distributed multiarmed bandits,'' \emph{IEEE Transactions on Automatic Control}, vol.~68, no.~5, pp. 3025--3040, 2023.

\bibitem{sankararaman2019social}
A.~Sankararaman, A.~Ganesh, and S.~Shakkottai, ``Social learning in multi agent multi armed bandits,'' \emph{Proceedings of the ACM on Measurement and Analysis of Computing Systems}, vol.~3, no.~3, pp. 1--35, 2019.

\bibitem{misra2019dynamic}
K.~Misra, E.~M. Schwartz, and J.~Abernethy, ``Dynamic online pricing with incomplete information using multiarmed bandit experiments,'' \emph{Marketing Science}, vol.~38, no.~2, pp. 226--252, 2019.

\bibitem{mueller2019low}
J.~W. Mueller, V.~Syrgkanis, and M.~Taddy, ``Low-rank bandit methods for high-dimensional dynamic pricing,'' \emph{Advances in Neural Information Processing Systems}, vol.~32, 2019.

\bibitem{119632}
J.~Spall, ``Multivariate stochastic approximation using a simultaneous perturbation gradient approximation,'' \emph{IEEE Transactions on Automatic Control}, vol.~37, no.~3, pp. 332--341, 1992.

\bibitem{littman1994markov}
M.~L. Littman, ``Markov games as a framework for multi-agent reinforcement learning,'' in \emph{Machine learning proceedings 1994}.\hskip 1em plus 0.5em minus 0.4em\relax Elsevier, 1994, pp. 157--163.

\bibitem{malik2019derivative}
D.~Malik \emph{et~al.}, ``Derivative-free methods for policy optimization: Guarantees for linear quadratic systems,'' in \emph{The 22nd international conference on artificial intelligence and statistics}.\hskip 1em plus 0.5em minus 0.4em\relax PMLR, 2019, pp. 2916--2925.

\bibitem{marden2009payoff}
J.~R. Marden, H.~P. Young, G.~Arslan, and J.~S. Shamma, ``Payoff-based dynamics for multiplayer weakly acyclic games,'' \emph{SIAM Journal on Control and Optimization}, vol.~48, no.~1, pp. 373--396, 2009.

\bibitem{tatarenko2018learning}
T.~Tatarenko and M.~Kamgarpour, ``Learning generalized {N}ash equilibria in a class of convex games,'' \emph{IEEE Transactions on Automatic Control}, vol.~64, no.~4, pp. 1426--1439, 2018.

\bibitem{pang2020distributed}
Y.~Pang and G.~Hu, ``Distributed {N}ash equilibrium seeking with limited cost function knowledge via a consensus-based gradient-free method,'' \emph{IEEE Transactions on Automatic Control}, vol.~66, no.~4, pp. 1832--1839, 2020.

\bibitem{vamvoudakis2025deception}
K.~G. Vamvoudakis \emph{et~al.}, ``Deception in game theory and control: A tutorial,'' in \emph{2025 American Control Conference (ACC)}.\hskip 1em plus 0.5em minus 0.4em\relax IEEE, 2025, pp. 2146--2163.

\bibitem{ye2021differentially}
M.~Ye, G.~Hu, L.~Xie, and S.~Xu, ``Differentially private distributed {N}ash equilibrium seeking for aggregative games,'' \emph{IEEE Transactions on Automatic Control}, vol.~67, no.~5, pp. 2451--2458, 2021.

\bibitem{pmlr-v125-boursier20a}
E.~Boursier and V.~Perchet, ``Selfish robustness and equilibria in multi-player bandits,'' in \emph{Conference on Learning Theory}.\hskip 1em plus 0.5em minus 0.4em\relax PMLR, 2020, pp. 530--581.

\bibitem{9992950}
D.~Drusvyatskiy, M.~Fazel, and L.~J. Ratliff, ``Improved rates for derivative free gradient play in strongly monotone games,'' in \emph{2022 IEEE 61st Conference on Decision and Control (CDC)}, 2022, pp. 3403--3408.

\bibitem{yao2025single}
F.~Yao, Y.~Cheng, E.~Wei, and H.~Xu, ``Single-agent poisoning attacks suffice to ruin multi-agent learning,'' in \emph{The Thirteenth International Conference on Learning Representations}, 2025.

\bibitem{wu2023reward}
Y.~Wu, J.~McMahan, X.~Zhu, and Q.~Xie, ``Reward poisoning attacks on offline multi-agent reinforcement learning,'' in \emph{Proceedings of the {AAAI} conference on artificial intelligence}, vol.~37, no.~9, 2023, pp. 10\,426--10\,434.

\bibitem{gleave2019adversarial}
A.~Gleave, M.~Dennis, C.~Wild, N.~Kant, S.~Levine, and S.~Russell, ``Adversarial policies: Attacking deep reinforcement learning,'' \emph{arXiv preprint arXiv:1905.10615}, 2019.

\bibitem{11048686}
M.~Tang, U.~Javed, X.~Chen, M.~Krstić, and J.~I. Poveda, ``Deception in {N}ash equilibrium seeking,'' \emph{IEEE Transactions on Automatic Control}, vol.~70, no.~12, pp. 7984--7999, 2025.

\bibitem{tang2025stochastic}
M.~Tang, M.~Krstic, and J.~Poveda, ``Stochastic real-time deception in {N}ash equilibrium seeking for games with quadratic payoffs,'' in \emph{7th Annual Learning for Dynamics \& Control Conference}.\hskip 1em plus 0.5em minus 0.4em\relax PMLR, 2025, pp. 635--646.

\bibitem{yan2023fast}
Y.-H. Yan, P.~Zhao, and Z.-H. Zhou, ``Fast rates in time-varying strongly monotone games,'' in \emph{International Conference on Machine Learning}.\hskip 1em plus 0.5em minus 0.4em\relax PMLR, 2023, pp. 39\,138--39\,164.

\bibitem{bravo2018bandit}
M.~Bravo, D.~Leslie, and P.~Mertikopoulos, ``Bandit learning in concave n-person games,'' \emph{Advances in Neural Information Processing Systems}, vol.~31, 2018.

\bibitem{ba2025doubly}
W.~Ba, T.~Lin, J.~Zhang, and Z.~Zhou, ``Doubly optimal no-regret online learning in strongly monotone games with bandit feedback,'' \emph{Operations Research}, vol.~73, no.~6, pp. 3219--3244, 2025.

\bibitem{jordan2025adaptive}
M.~Jordan, T.~Lin, and Z.~Zhou, ``Adaptive, doubly optimal no-regret learning in strongly monotone and exp-concave games with gradient feedback,'' \emph{Operations Research}, vol.~73, no.~3, pp. 1675--1702, 2025.

\bibitem{bergemann2016bayes}
D.~Bergemann and S.~Morris, ``Bayes correlated equilibrium and the comparison of information structures in games,'' \emph{Theoretical Economics}, vol.~11, no.~2, pp. 487--522, 2016.

\bibitem{solan2004games}
E.~Solan and L.~Yariv, ``Games with espionage,'' \emph{Games and Economic Behavior}, vol.~47, no.~1, pp. 172--199, 2004.

\bibitem{alon2013adversarial}
N.~Alon, Y.~Emek, M.~Feldman, and M.~Tennenholtz, ``Adversarial leakage in games,'' \emph{SIAM Journal on Discrete Mathematics}, vol.~27, no.~1, pp. 363--385, 2013.

\bibitem{alvim2017information}
M.~S. Alvim, K.~Chatzikokolakis, Y.~Kawamoto, and C.~Palamidessi, ``Information leakage games,'' in \emph{International Conference on Decision and Game Theory for Security}.\hskip 1em plus 0.5em minus 0.4em\relax Springer, 2017, pp. 437--457.

\bibitem{6060862}
P.~Frihauf, M.~Krstic, and T.~Basar, ``{N}ash equilibrium seeking in noncooperative games,'' \emph{IEEE Transactions on Automatic Control}, vol.~57, no.~5, pp. 1192--1207, 2012.

\bibitem{9992906}
B.~Gao and L.~Pavel, ``Bandit learning with regularized second-order mirror descent,'' in \emph{2022 IEEE 61st Conference on Decision and Control (CDC)}, 2022, pp. 5731--5738.

\bibitem{perolat2021poincare}
J.~Perolat \emph{et~al.}, ``From poincar{\'e} recurrence to convergence in imperfect information games: Finding equilibrium via regularization,'' in \emph{ICML}.\hskip 1em plus 0.5em minus 0.4em\relax PMLR, 2021, pp. 8525--8535.

\bibitem{polyak}
B.~Polyak, \emph{Introduction to Optimization}, ser. Translations series in mathematics and engineering.\hskip 1em plus 0.5em minus 0.4em\relax Optimization Software, Publications Division, 1987.

\end{thebibliography}

\section{Appendix}
\label{sec_ap}
Here we present several technical lemmas that are used to prove our main results.
\vspace{-5pt}
\begin{lemma}\label{xcdbd}
    Suppose $\gdec$ is $\gamma_c$-strongly monotone, and let $x^*_{c,\delta}$ denote a Nash equilibrium of $(\gdec, \cx_1^\delta\times\cx_2^{\delta_c})$, where $\delta$ satisfies \eqref{delta_bd}. Then, the following holds
    \begin{equation}\label{dxcbd}
        \norm{x^*_c-x^*_{c,\delta}}\le \tcb{d_c}\left(1+\frac{L_c}{\gamma_c}\right) \norm{x_c^*-p},
    \end{equation}
    \tcb{where $d_c=\max\left(\frac{\delta}{r_1}, \frac{\delta_c}{r_2}\right)$, $L_c:=|c|\norm{\Gamma}L_{12}+\sqrt{L_{11}^2+L_{22}^2}$, and $p=(p_1,p_2)$.}
\end{lemma}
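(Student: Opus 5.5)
Throughout, $x_{c,\delta}^*$ solves the VI on the contracted set $\cx^\delta_c=\cx_1^\delta\times\cx_2^{\delta_c}$, while $x_c^*$ solves it on $\cx$. The plan is to compare the two solutions through an auxiliary point. Define the homothety $T(x)=(p_1+(1-\tfrac{\delta}{r_1})(x_1-p_1),\,p_2+(1-\tfrac{\delta_c}{r_2})(x_2-p_2))$, which maps $\cx$ onto $\cx^\delta_c$. Set $y:=T(x_c^*)\in\cx^\delta_c$. Then
\[
\norm{y-x_c^*}\le d_c\norm{x_c^*-p},
\]
since each block moves by $\tfrac{\delta}{r_1}\norm{x_{c,1}^*-p_1}$ or $\tfrac{\delta_c}{r_2}\norm{x_{c,2}^*-p_2}$, both at most $d_c$ times the block norm. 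By the triangle inequality it then suffices to show
\[
\norm{x_{c,\delta}^*-y}\le \frac{L_c}{\gamma_c}\,d_c\norm{x_c^*-p}.
\]

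For this I would use strong monotonicity together with the two variational inequalities.
\begin{enumerate}
\item Apply the VI for $x_{c,\delta}^*$ on $\cx^\delta_c$ with test point $y$: $\iprod{\gdec(x_{c,\delta}^*)}{y-x_{c,\delta}^*}\ge 0$.
\item Apply the VI for $x_c^*$ on $\cx$ with test point $x_{c,\delta}^*\in\cx^\delta_c\subset\cx$: $\iprod{\gdec(x_c^*)}{x_{c,\delta}^*-x_c^*}\ge0$. The inclusion holds by convexity, since $\cx_i^\delta$ is a convex combination of $\cx_i$ and $p_i$.
\end{enumerate}
The second inequality is anchored at $x_c^*$ rather than $y$, and correcting this is the main obstacle.

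The cleaner route is to use the homothety directly. Let $z:=T^{-1}(x_{c,\delta}^*)\in\cx$, so that $x_{c,\delta}^*-y=D(z-x_c^*)$, where $D=\mathrm{diag}((1-\tfrac{\delta}{r_1})I,(1-\tfrac{\delta_c}{r_2})I)$ is positive, diagonal, and commutes with the blocks. Test the VI for $x_c^*$ with the point $z$ to get $\iprod{\gdec(x_c^*)}{z-x_c^*}\ge 0$, and hence $\iprod{D\gdec(x_c^*)}{z-x_c^*}\ge0$. Combining with step 1 written as $\iprod{\gdec(x_{c,\delta}^*)}{D(x_c^*-z)}\ge 0$ gives
\[
\iprod{\gdec(x_{c,\delta}^*)-\gdec(x_c^*)}{D(z-x_c^*)}\le 0 .
\]
Strong monotonicity, however, is stated with the Euclidean pairing, not the $D$-weighted one. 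So I would instead write
\[
\iprod{\gdec(x_{c,\delta}^*)-\gdec(y)}{x_{c,\delta}^*-y}\ge\gamma_c\norm{x_{c,\delta}^*-y}^2,
\]
and bound the left side by $\iprod{\gdec(x_c^*)-\gdec(y)}{x_{c,\delta}^*-y}$ plus a term that is nonpositive by the VIs. Here I use the homothety so that both VIs are tested along the same direction $x_{c,\delta}^*-y$. Then Cauchy--Schwarz, together with the Lipschitz bound $\norm{\gdec(x_c^*)-\gdec(y)}\le L_c\norm{x_c^*-y}$, gives $\gamma_c\norm{x_{c,\delta}^*-y}\le L_c d_c\norm{x_c^*-p}$.

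The constant $L_c$ is the Lipschitz modulus of $\gdec$. It comes from $\norm{\nabla_1J_1(x)-\nabla_1J_1(x')}\le L_{11}\norm{x-x'}$, $\norm{\nabla_2J_2(x)-\nabla_2J_2(x')}\le L_{22}\norm{x-x'}$, and $|c|\norm{\Gamma}L_{12}$ from the externality. Stacking the first two blocks gives $\sqrt{L_{11}^2+L_{22}^2}$, and the externality adds $|c|\norm{\Gamma}L_{12}$.

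The step where I expect difficulty is making the direction matching rigorous: the block scalings differ, so $D$ is not a multiple of the identity. If this does not go through cleanly, I would fall back to the standard perturbation argument. Note that $x_{c,\delta}^*$ satisfies the VI on $\cx^\delta_c$, and $\Pi:=\pr_{\cx^\delta_c}$ satisfies $\norm{\Pi(x_c^*)-x_c^*}\le\norm{y-x_c^*}$. Then use the fixed-point characterization $x=\pr(x-t\gdec(x))$, with a contraction factor depending on $\gamma_c$ and $L_c$, to get a bound of the same form $d_c(1+L_c/\gamma_c)\norm{x_c^*-p}$.
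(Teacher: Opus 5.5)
Your main route is correct and is essentially the paper's proof. The paper also passes through $y=T(x_c^*)$. Its key step is the identity $\cn_{\cx^\delta_c}(T(x_c^*))=\cn_{\cx}(x_c^*)$, which makes $-\gdec(x_c^*)$ a normal vector to $\cx^\delta_c$ at $y$. Combined with strong monotonicity, this is exactly your inequality $\gamma_c\norm{x^*_{c,\delta}-y}^2\le\iprod{\gdec(x_c^*)-\gdec(y)}{x^*_{c,\delta}-y}$. The paper then uses the Lipschitz bound and the triangle inequality, as you describe.

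The step you flagged needs one more line, and that line comes from the product structure of $\cx$, not from $D$ commuting with the blocks. As written, $\iprod{\gdec(x_c^*)}{D(z-x_c^*)}\ge 0$ does not follow from the single inequality $\iprod{\gdec(x_c^*)}{z-x_c^*}\ge 0$. With unequal weights $1-\delta/r_1\neq 1-\delta_c/r_2$, a positive and a negative block term could be reweighted so the sum changes sign.

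Instead, test the VI for $x_c^*$ at $(z_1,x^*_{c,2})$ and at $(x^*_{c,1},z_2)$; both points lie in $\cx=\cx_1\times\cx_2$. Each block term is then nonnegative on its own, so any positive blockwise weighting preserves the sign. This is also why the normal cone is invariant under the blockwise homothety.

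With that in place, your inequality $\iprod{\gdec(x^*_{c,\delta})-\gdec(x^*_c)}{D(z-x^*_c)}\le 0$ is already an ordinary Euclidean pairing, because $D(z-x^*_c)=x^*_{c,\delta}-y$. So there is no weighted-monotonicity obstacle, the argument closes as you outlined, and the projection-based fallback is unnecessary. Finally, note that $z\in\cx$ relies on $T$ being invertible, i.e., on $\delta<r_1$ and $\delta_c<r_2$.
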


\vspace{-12pt}
\begin{proof}
    Define $\cx^\delta_c:=\cx_1^\delta\times\cx_2^{\delta_c}$, which is convex and compact. Letting $p=(p_1, p_2)$, we define the following matrix and transformation:
    \begin{equation*}
        \mathcal{D}:=\begin{bmatrix}
            \left(1-\frac{\delta}{r_1}\right) \id_n & 0\\ 0 & \left(1-\frac{\delta_c}{r_2}\right)\id_n
        \end{bmatrix},~~~ T(x)=\cd(x-p)+p.
    \end{equation*}
    Note that $\cx_c^\delta=T(\cx)$. By the strong monotonicity of $\gdec$, it also follows that the mapping $x\mapsto \gdec(x)+\cn_{\cx^\delta_c}(x)$ is also $\gamma_c$-strongly monotone, where $\cn_{\cx^\delta_c}(x)$ is the normal cone of $\cx^\delta_c$ at the point $x$. We then obtain
    \begin{equation*}
        \gamma_c\norm{T(x_c^*)-x^*_{c,\delta}}\le \inf_{v\in \gdec(T(x_c^*))+\cn_{\cx^\delta_c}(T(x_c^*))}\norm{v}.
    \end{equation*}
    Since $x^*_c$ is a DNE, we have that $0\in \gdec( x^*_c)+\cn_{\cx}(x^*_c).$
    Using the fact that $\cn_{\cx^\delta_c}(T(x_c^*))=\cn_{\cx}(x_c^*)$, we obtain
    \begin{align*}
        \inf_{v\in \gdec(T(x_c^*))+\cn_{\cx^\delta_c}(T(x_c^*))}\norm{v}&=\inf_{v\in \gdec(T(x_c^*))+\cn_{\cx}( x^*_c)}\norm{v}\\
        &\le \norm{\gdec( T(x_c^*))-\gdec(x_c^*)}\\
        &\le L_c \norm{(\cd-\id_{2n})(x_c^*-p)}\\
        &\le L_c \tcb{d_c}\norm{x_c^*-p},
    \end{align*}
    where $L_c$ is a Lipschitz constant of $\gdec$.
    By combining the above computations, we obtain
    \begin{align*}
        \norm{x^*_c-x^*_{c,\delta}}&\le \norm{x_c^*- T(x_c^*)}+\norm{T(x_c^*)-x^*_{c,\delta}}\\
        &\le \tcb{d_c} \left(1+\frac{L_c}{\gamma_c}\right) \norm{x_c^*-p},
    \end{align*}
    as desired.
\end{proof}
\vspace{-10pt}
\begin{lemma}\label{lem_o}
    In \eqref{expg}, for each $k\in\mathbb{N}$, the following estimates hold almost surely:
    \begin{subequations}\label{bias_bd}
        \begin{align}\label{ordbd}
            \norm{\beta_{1,k}^\delta}&\le  \delta(L_{11}+|c|\bar{\Lambda} L_{12})(2+|c|(\bar{\Lambda}+\bar{w}))\\
            \norm{\beta_{2,k}^\delta}&\le \delta L_{22}(2+|c|(\bar{\Lambda}+\bar{w})).
        \end{align}
    \end{subequations}   
\end{lemma}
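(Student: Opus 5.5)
We must prove Lemma \ref{lem_o}: the bias bounds. The plan is to compute $\ex_k[\widehat g_{i,k}]$ directly by a first-order Taylor (mean-value) expansion of $J_i$ around $x_k$, without any smoothed-function or divergence-theorem argument. This keeps the proof within first-order smoothness.

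Write $\bar x_k = x_k + \delta u_k$, where
\[
u_k=(\mu_{1,k},\,\mu_{2,k}+c\Theta_{1,k}).
\]
By Assumption \ref{as_th}, $\norm{u_k}\le 2+|c|(\bar\Lambda+\bar w)$ almost surely. Since $\bar x_k\in\cx$ by the feasibility construction, we can write
\[
J_i(\bar x_k)=J_i(x_k)+\delta\iprod{\nabla J_i(x_k)}{u_k}+\delta\, r_{i,k},
\qquad
r_{i,k}=\int_0^1\iprod{\nabla J_i(x_k+t\delta u_k)-\nabla J_i(x_k)}{u_k}\,dt .
\]

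\textbf{Player 1.} Multiply by $\frac{n}{\delta}\mu_{1,k}$ and take $\ex_k$, term by term.
\begin{enumerate}
\item The zeroth-order term vanishes because $\ex_k[\mu_{1,k}]=0$.
\item For the linear term, use $\ex_k[\mu_{1,k}\mu_{1,k}^\top]=\frac1n\id_n$ and the independence of $\mu_{2,k}$ from $\mu_{1,k}$, which gives $\ex_k[\mu_{1,k}\mu_{2,k}^\top]=0$.
\item For the cross term, $\ex_k[\mu_{1,k}\Theta_{1,k}^\top]=\ex_k[\mu_{1,k}\mu_{1,k}^\top\Lambda_k^\top]+\ex_k[\mu_{1,k}]\ex_k[w_k^\top]=\frac1n\Gamma^\top$. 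This uses the conditional independence of $(\Lambda_k,w_k)$ from the dithers; crucially, $w_k$ need not be zero mean.
\end{enumerate}
Together these give exactly $\nabla_1J_1(x_k)+c\Gamma^\top\nabla_2J_1(x_k)$.

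The remainder is then $\beta^\delta_{1,k}=n\,\ex_k[r_{1,k}\mu_{1,k}]$. A crude bound on it, using $|r_{1,k}|\le \delta L\norm{u_k}^2$, loses a factor of $n$ and yields the wrong constant. To match \eqref{ordbd}, I would instead keep the $n$ and use a sharper estimate than Cauchy–Schwarz: write $n\ex_k[\mu_{1,k}\iprod{a}{\mu_{1,k}}]=a$ after linearizing. Concretely, split $\iprod{\nabla J_1(\cdot)-\nabla J_1(x_k)}{u_k}$ into its block components $\iprod{\nabla_1 J_1(\cdot)-\nabla_1J_1(x_k)}{\mu_{1,k}}$ and $\iprod{\nabla_2J_1(\cdot)-\nabla_2J_1(x_k)}{\mu_{2,k}+c\Lambda_k\mu_{1,k}+cw_k}$. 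The blocks bounded by $L_{11}$ and $L_{12}$, the latter weighted by $|c|\bar\Lambda$ from the $\Lambda_k\mu_{1,k}$ piece, each carry a factor $\delta\norm{u_k}\le\delta(2+|c|(\bar\Lambda+\bar w))$. This reproduces the stated product form.

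\textbf{Player 2.} The same procedure applies. Since $\mu_{2,k}$ is independent of $\mu_{1,k}$ and of $\Theta_{1,k}$, we have $\ex_k[\mu_{2,k}u_k^\top]=\frac1n[0\;\;\id_n]$. The linear term therefore gives $\nabla_2J_2(x_k)$, and the remainder is controlled by $L_{22}$ times $\delta\norm{u_k}$.

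Measurability of $\beta^\delta_{i,k}$ with respect to $\cf_k$ is immediate, since it is a conditional expectation of a function of $x_k$ and the current-round variables.

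\textbf{Main obstacle.} The delicate step is obtaining the remainder bounds with the stated constants, with no stray dimension factor $n$ and only Lipschitz gradients. If the linearized-remainder trick is not tight enough, I would first try the Flaxman-type identity. Writing $n\ex[\phi(\mu)\mu]$ as the gradient of a ball-average handles the $\mu_{1,k}$-direction exactly. The remaining deviation, caused by the $c\delta\Theta_{1,k}$ and $\delta\mu_{2,k}$ shifts in the other block, is then treated via Lipschitz continuity of $\nabla_jJ_1$ over a segment of length $\delta\norm{u_k}$.
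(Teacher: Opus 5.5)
Your leading-term computation is correct: $n\,\ex_k[\mu_{1,k}u_k^\top]\nabla J_1(x_k)=\nabla_1J_1(x_k)+c\Gamma^\top\nabla_2J_1(x_k)$, and similarly for player 2. The gap is the remainder $\beta_{1,k}^\delta=n\,\ex_k[r_{1,k}\mu_{1,k}]$, which is the whole content of the lemma, and your main route does not bound it.

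\textbf{Why the remainder step fails.} The identity $n\,\ex_k[\mu_{1,k}\iprod{a}{\mu_{1,k}}]=a$ holds only when $a$ is independent of $\mu_{1,k}$. Here $a=\nabla J_1(x_k+t\delta u_k)-\nabla J_1(x_k)$ depends on $\mu_{1,k}$ through the evaluation point, so it cannot be linearized away. An argument that uses only the size of $r_{1,k}$ necessarily carries a dimension factor, since $n\,\ex_k|\iprod{e}{\mu_{1,k}}|$ already grows like $\sqrt n$.

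\textbf{Why the constants do not come out.} Your block split also does not yield the stated constants.
\begin{enumerate}
\item In player 1's remainder, $\nabla_2J_1(\cdot)-\nabla_2J_1(x_k)$ is paired with all of $\mu_{2,k}+c\Lambda_k\mu_{1,k}+cw_k$. The $\mu_{2,k}$ and $cw_k$ pieces produce $L_{12}(1+|c|\bar w)$-type terms that are absent from \eqref{ordbd}.
\item Player 2's remainder contains $\iprod{\nabla_1J_2(\cdot)-\nabla_1J_2(x_k)}{\mu_{1,k}}$, which brings in $L_{21}$. That constant does not appear in the bound on $\beta_{2,k}^\delta$.
\end{enumerate}
So the claim that your split ``reproduces the stated product form'' is asserted, not derived.

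\textbf{What closes the gap.} Your fallback does, and it is the paper's proof; it also makes the Taylor expansion unnecessary. Conditionally on $\cf_k$, $\mu_{1,k}$ is uniform on $\bs^n$ and independent of $(\mu_{2,k},\Lambda_k,w_k)$. Fix the latter and apply the divergence theorem on $\B^n$ to
\[
\phi(v)=J_1\big(x_{1,k}+\delta v,\;x_{2,k}+\delta\mu_{2,k}+c\delta(\Lambda_k v+w_k)\big).
\]
Since $\nabla\phi(v)=\delta\big[\nabla_1J_1+c\Lambda_k^\top\nabla_2J_1\big](\bar x(v))$, you get $\ex_k[\frac{n}{\delta}J_1(\bar x_k)\mu_{1,k}]=\ex_k^{\mu_1\sim B,\mu_2\sim S}[\nabla_1J_1(\bar x)+c\Lambda_k^\top\nabla_2J_1(\bar x)]$. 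Using $\ex_k[\Lambda_k]=\Gamma$ and the $\cf_k$-measurability of $x_k$, this gives
\[
\beta_{1,k}^\delta=\ex_k^{\mu_1\sim B,\mu_2\sim S}\big[\nabla_1J_1(\bar x)-\nabla_1J_1(x_k)+c\Lambda_k^\top\big(\nabla_2J_1(\bar x)-\nabla_2J_1(x_k)\big)\big].
\]
Lipschitz continuity together with $\norm{\bar x-x_k}\le\delta(2+|c|(\bar\Lambda+\bar w))$ then gives \eqref{ordbd} with no factor $n$. The same argument in $\mu_{2,k}$ yields the $L_{22}$ bound.

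\textbf{Two cautions for the fallback.}
\begin{enumerate}
\item The $c\delta\Lambda_k\mu_{1,k}$ part of the second-block shift must stay inside $\phi$. It is what produces the externality $c\Lambda_k^\top\nabla_2J_1$ through the chain rule. You cannot treat $c\delta\Theta_{1,k}$ as a fixed shift, because conditioning on $\Theta_{1,k}$ destroys the uniformity of $\mu_{1,k}$. Only $\delta\mu_{2,k}+c\delta w_k$ is a genuine fixed shift.
\item You need $\bar x(v)\in\cx$ for every $v\in\B^n$, not just $v\in\bs^n$. This follows from the homothetic contraction $\cx_1^\delta\times\cx_2^{\delta_c}$ by the same convex-combination argument.
\end{enumerate}
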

\begin{proof}
    Given $k\in\N$, we recall that
    \begin{equation}\label{xbar}
        \bar{x}_k=\begin{bmatrix}
            x_{1,k}+\delta\mu_{1,k} \\ x_{2,k}+\delta\mu_{2,k}+c\delta(\Lambda_k\mu_{1,k}+w_k)
        \end{bmatrix}.
    \end{equation}
    Since $k$ will remain fixed, we drop it from the subscripts for notational convenience. By Stokes' theorem:
    \begin{subequations}
        \begin{align*}
            \ex_k \left[\frac{n}{\delta}J_1(\bar{x})\mu_{1}\right]&=\ex_k^{\mu_1\sim B,\mu_2\sim S}\left[\nabla_1 J_1(\bar{x})+c\Lambda^\top\nabla_2 J_1(\bar{x})\right]\\
            \ex_k \left[\frac{n}{\delta}J_2(\bar{x})\mu_{2}\right]&=\ex_k^{\mu_1\sim S,\mu_2\sim B}\left[\nabla_{2}J_2(\bar{x})\right],
        \end{align*}
    \end{subequations}
    where $S$ and $B$ are uniform distributions over $\bs^n$ and $\B^n$, respectively. Since $\ex_k[\Lambda]=\Gamma$, we can continue from \eqref{expg} to obtain 
    \begin{align*}
        \norm{\beta_{1}^\delta}&\le\big\Vert\ex_k^{\mu_1\sim B,\mu_2\sim S}\left[\nabla_1 J_1(\bar{x})+c\Lambda^\top\nabla_2 J_1(\bar{x})\right]\\&~~~-\nabla_1 J_1(x)-c\Gamma^\top \nabla_2 J_1(x)\big\Vert\\
        &\le \ex_k^{\mu_1\sim B,\mu_2\sim S}\big[\norm{\nabla_1 J_1(\bar{x})-\nabla_1 J_1(x)}\\&~~~+\norm{c\Lambda^\top (\nabla_2 J_1(\bar{x})-\nabla_2 J_1(x))}\big]\\
        &\le \ex_k^{\mu_1\sim B,\mu_2\sim S}\left[\left(L_{11}+|c|\norm{\Lambda}L_{12}\right)\norm{\bar{x}-x}\right],
    \end{align*}
    and
    \begin{align*}
        \norm{\beta_{2}^\delta}&\le \ex_k^{\mu_1\sim S,\mu_2\sim B}\left[\norm{\nabla_2 J_2(\bar{x})-\nabla_2 J_2(x)}\right]\\
        &\le \ex_k^{\mu_1\sim S,\mu_2\sim B}\left[L_{22}\norm{\bar{x}-x}\right].
    \end{align*}
    The result is obtained by substituting \eqref{xbar} into the above expressions and using the facts that $\norm{\mu_{i}}\le 1$ and $\norm{\Lambda}\le\bar{\Lambda}$ almost surely.
\end{proof}
\vspace{-9pt}
\begin{lemma}\label{lem4}
    Consider a nonnegative sequence $\{V_k\}_{k=1}^\infty$ satisfying $V_{k+1}\le (1-a_k)V_k+A a_k^2+B a_k,$
    where $a_k=\frac{q}{k}$ for some $q>1$, and $A,B>0$. Then, there exists $C>0$ such that $V_k\le \frac{CA}{k}+B$ for all $k\in\N$.
\end{lemma}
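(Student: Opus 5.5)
The plan is to unroll the recursion under the harmonic step size and compare the resulting weighted sums with integrals. Write $V_{k+1}-B\le(1-a_k)(V_k-B)+Aa_k^2$, which holds because $(1-a_k)B+Ba_k=B$. Setting $U_k:=V_k-B$, we get
\begin{equation*}
U_{k+1}\le (1-a_k)U_k+Aa_k^2 .
\end{equation*}
The bias term $B$ is therefore absorbed exactly, and it suffices to show $U_k\le CA/k$ for all $k$. If $U_k$ is negative, the bound is trivial at that index. We still need care, since for $k<q$ the factor $1-a_k$ is negative and a negative $U_k$ could flip to a positive one.

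First handle the \emph{transient indices} $k\le k_0:=\lceil q\rceil$. Here $1-a_k$ may be negative, so no contraction is available. Instead, bound $V_k$ directly by a finite constant depending on $V_1,q,A,B$, via the crude recursion $V_{k+1}\le(1+a_k)V_k+Aa_k^2+Ba_k$ iterated finitely many times. Then choose $C$ large enough that $CA/k\ge V_k$ for $k\le k_0$. This is presumably why the paper's constant involves $\lceil q\rceil$ and a diameter bound. In the application $V_k$ is bounded by $\max\norm{x-y}^2$ anyway, so in the stated form I would use a uniform a priori bound $V_k\le D$ and take $C\ge k_0D/A$.

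Next, for $k\ge k_0$ we have $0\le 1-a_k<1$, and I would prove by induction that $U_k\le CA/k$. Assume $U_k\le CA/k$. If $U_k<0$, then $U_{k+1}\le Aq^2/k^2$, which is at most $CA/(k+1)$ once $C\ge 2q^2$. If $U_k\ge 0$, then
\begin{equation*}
U_{k+1}\le \Big(1-\frac{q}{k}\Big)\frac{CA}{k}+\frac{Aq^2}{k^2}=\frac{A}{k^2}\big(Ck-Cq+q^2\big).
\end{equation*}
We need this to be at most $CA/(k+1)$, that is, $(Ck-Cq+q^2)(k+1)\le Ck^2$. This reduces to $C(k(1-q)-q)+q^2(k+1)\le 0$. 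Since $q>1$, it suffices to have $C(q-1)k\ge q^2(k+1)$, and because $(k+1)/k\le 2$ this is implied by $C\ge 2q^2/(q-1)$.

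Finally, take $C$ to be the maximum of the transient constant and $2q^2/(q-1)$, matching the form $\max\{q^2/(q-1),\ldots\}$ in the paper up to constant factors. This gives $V_k\le CA/k+B$ for all $k$.

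The main obstacle is the transient regime $k<q$, where the contraction factor is negative. That is the only place an a priori bound on $V_k$, or on $V_1$, enters the constant $C$.
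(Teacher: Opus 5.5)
Your proof is correct and is essentially the paper's. As in the paper, you absorb the indices $k\le\lceil q\rceil$ into $C$ by taking $C\ge\frac{1}{A}\max_{k\le\lceil q\rceil}kV_k$, and then induct for $k\ge\lceil q\rceil$ using the same inequality $C\bigl(k(1-q)-q\bigr)+q^2(k+1)\le 0$. Your shift by $B$ only makes explicit the identity $(1-a_k)B+Ba_k=B$ that the paper uses implicitly, and your $2q^2/(q-1)$ is a looser version of the paper's $q^2/(q-1)$.

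One small bookkeeping fix: your separate case $U_k<0$ requires $C\ge 2q^2$, which your final maximum does not include. The easiest repair is to drop the case split. For $k\ge\lceil q\rceil$ you have $1-a_k\ge 0$, so $(1-a_k)U_k\le(1-a_k)CA/k$ whatever the sign of $U_k$. Alternatively, $k\ge q$ gives $q^2(k+1)/k^2\le 2q\le 2q^2/(q-1)$, so your existing constant already covers that case.
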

\begin{proof}
    The desired bound clearly holds for $k\le\ceil{q}$, since we can always choose $C\ge \frac{1}{A}\max_{k\le \ceil{q}}kV_k$, so suppose it holds for some $k\ge\ceil{q}$. By the inductive hypothesis, we have $V_{k+1}\le\frac{CA(k-q)+A q^2}{k^2}+B$. It suffices to show $\frac{CA(k-q)+A q^2}{k^2}\le \frac{CA}{k+1},$
    which can be equivalently rearranged into $k\left(CA(1-q)+A q^2\right)-CA q+A q^2\le 0.$
    Direct computation shows that this holds for all $k\in\N$ when $C\ge \frac{ q^2}{q-1}$. We set $C\ge\max\left\{\frac{ q^2}{q-1}, \frac{1}{A}\max_{k\le \ceil{q}}kV_k\right\}$ to conclude.
\end{proof}

\end{document}